\documentclass{article} 

\usepackage[preprint]{neurips_2026}

\usepackage{amsmath,amsfonts,amssymb, dsfont, amsthm, mathtools,bm}

\def\eqref#1{equation~\ref{#1}}

\def\1{\bm{1}}

\DeclareMathAlphabet{\mathsfit}{\encodingdefault}{\sfdefault}{m}{sl}
\SetMathAlphabet{\mathsfit}{bold}{\encodingdefault}{\sfdefault}{bx}{n}

\newcommand{\pdata}{p_{\rm{data}}}

\newcommand{\E}{\mathbb{E}}

\newcommand{\KL}{\mathrm{KL}}
\DeclareMathOperator{\TV}{TV}

\DeclareMathOperator*{\argmax}{arg\,max}

\newcommand{\thed}{\mathrm{d}}
\DeclarePairedDelimiter{\norm}{\lVert}{\rVert}
\DeclarePairedDelimiter{\abs}{\lvert}{\rvert}

\newcommand{\cond}{\mathrm{cond}}
\newcommand{\loc}{\mathrm{loc}}
\newcommand{\Reals}{\mathbb{R}}
\newcommand{\ident}{\mathbb{I}}
\newcommand{\normal}{\mathcal{N}}

\newtheorem{hypothesis}{Hypothesis}
\newtheorem{theorem}{Theorem}

\newtheorem{lemma}{Lemma}

\newtheorem{definition}{Definition}

\newtheorem{proposition}{Proposition}
\usepackage{microtype}
\usepackage{import, graphicx, transparent}

\usepackage[pagebackref=true]{hyperref}
\usepackage{url}

\usepackage[dvipsnames]{xcolor}

\makeatletter
\def\thanks#1{\protected@xdef\@thanks{\@thanks
        \protect\footnotetext{#1}}}
\makeatother

\title{Breakdown of Local Denoising as Semantic Speciation}

\author{%
  Guangkuo Liu${^\spadesuit}$\\
  JILA and Department of Physics\\
  University of Colorado Boulder\\
  Boulder, CO 80309, USA \\
  \texttt{guangkuo.liu@colorado.edu} \\
  \And 
  Mert Okyay$^{\spadesuit}$\\
  CTQM and Department of Physics\\
  University of Colorado Boulder\\
  Boulder, CO 80309, USA \\
  \texttt{mert.okyay@colorado.edu} \\
  \And
  Yifan F.~Zhang \\
  Department of Electrical \\
  and Computer Engineering\\
  Princeton University\\
  Princeton, NJ 08544, USA \\
  \texttt{yz4281@princeton.edu} \\
  \And
  Fangjun Hu \\
  QuEra Computing Inc.\\
  1284 Soldiers Field Road, \\
  Boston, MA 02135, USA \\
  \texttt{fhu@quera.com} \\
  \And
  Rahul Nandkishore \\
  CTQM and Department of Physics\\
  University of Colorado Boulder\\
  Boulder, CO 80309, USA \\
  \texttt{rahul.nandkishore@colorado.edu} \\
  \And
  Xun Gao \\
  JILA and Department of Physics\\
  University of Colorado Boulder\\
  Boulder, CO 80309, USA \\
  \texttt{xun.gao@colorado.edu} \\
  \thanks{$\spadesuit$ Equal contribution.}
}
\usepackage{graphicx}

\begin{document}

\maketitle

\begin{abstract}

The dynamics of generative models exhibit two apparently distinct temporal windows: a speciation window, in which a sample commits to a semantic class, and a nonlocality window, in which local context windows become insufficient for generation. Motivated by evidence of their near-concurrence in a variety of frontier models, we investigate their relationship through the spatial distribution of semantic information. Under a ``common cause'' hypothesis, we prove that the nonlocality window must lie in the speciation window.
This hypothesis postulates that semantic labels explain a fraction of the correlations between distant tokens, a condition that is natural for many real datasets. 
We further give conditions under which both windows shrink to a single limiting time as system size grows, defining a ``phase transition'', and verify this behavior analytically in Gaussian mixtures.
Together, these results identify conditions under which semantic information explains the concurrence of speciation and nonlocality, connecting two complementary perspectives on the emergence of semantic structure in generative modeling.

\end{abstract}

\section{Introduction}
\label{sec:intro}
Real-world data often contains distinct semantic classes. An image dataset may contain both cats and dogs; a dataset of English text may include sonnets from Shakespeare as well as essays on history.
A good generative model must be able to sample from the underlying distribution and thus reproduce this structure~\cite{pham2024memorization,shah2025does}. Such features must therefore emerge during inference. 
When during generation is this identity determined, and when can a small perturbation still change it?
We call the period during which the sample commits to a semantic class the \emph{speciation window}~\cite{dynamical_regimes,critical_windows}. 
Early empirical studies show that speciation window persists for a very short time~\cite{SDEedit2022,perceptual_imp2022}, a phenomenon that has been connected to the theory of phase transition in physics~\cite{raya2023spontaneous,dynamical_regimes,hierarchical_diffusion,takahashi2026dynamicalregimesdiscretediffusion}. 

A separate question is how much context a model needs when generating one part of a sample and how this context varies during generation.
For example, generating a patch of fur may require only nearby texture, while generating an animal's eye might require information from farther away to make sure it only has two, and is at the right location.
At some stages, a local neighborhood may be sufficient; at others, restricting the model to that neighborhood may prevent accurate generation.
We call the period during which generation requires information of a neighboorhood size that reaches the size of the image 
\emph{nonlocality window}. Recent work by~\cite{hu2025localdiffusionmodelsphases} has identified this window in simple datasets. Other recent studies analyse the emergence of locality structure through data~\cite{lukoianov2025locality} and how this structure facilitates generalization~\cite{kamb2024analytic,niedoba2024towards,hunt2026exact}, hinting that locality might be a crucial knob to understanding generative modeling.

Although speciation and nonlocality are a priori distinct,
recent experiments suggest that their windows closely align.
A very recent study~\cite{zhang2026concurrencesymmetrybreakingnonlocality} observed this alignment in two open-source diffusion models, DiT-XL~\cite{peebles2023scalable} and Stable Diffusion 3~\cite{esser2024scalingrectifiedflowtransformers}, under an analysis of neural circuitry and controlled experiments.
A complementary analysis of patch-based scores connects architectural locality to collective spatial instabilities and the formation of coherent patterns, with growing spatial correlations observed in trained convolutional diffusion models~\cite{ambrogioni2026outofequilibriumphasetransitionsseed}. Related observations in autoregressive models show that semantic commitment also occurs in short windows beyond diffusion~\cite{li2025blink}.
These observations motivate our central question: why should the period when a sample acquires its semantic identity also be the period when generation needs distant context?

In this work, we formalize this connection theoretically under a simple hypothesis---that semantic information is nonlocally encoded. For example, in a dataset containing zebras and leopards, observing stripes in one region can help predict stripes in a distant region because both reflect the animal’s species. Unconditioned models develop these coherent features across distant domains, necessitating nonlocal computation to coordinate the generation process. As such, the window of semantic speciation must also include a window of nonlocality, temporally aligning the two phenomena.
To make this intuition precise, we identify two information quantities that characterize semantic speciation and nonlocal computation, and establish a quantitative relation between them. \emph{Mutual information} (MI) between local parts of samples and the class label quantifies the amount of the semantic information exposed in local regions, revealing the speciation window~\cite{handke2025measuringsemanticinformationproduction,handke2026entropicsignatureclassspeciation}; \emph{Conditional mutual information} (CMI) between parts of the sample characterizes the error incurred when compute is restricted to a local region~\cite{hu2025localdiffusionmodelsphases}, revealing the nonlocality window. We derive upper and lower bounds on CMI using MI's, under a \emph{common-cause hypothesis}: the semantic labels explain a fraction of the dependence across regions. This formalizes the nonlocal encoding of semantic information as contributions to CMI. These bounds result in the containment of the nonlocality window in the speciation window. Since our theory is fundamentally information-theoretic, the theorem applies to the generation dynamics of both autoregressive and diffusion models, independent of the noising process.
The concrete examples and experiments in this paper focus on diffusion models; direct tests in autoregressive models are left to future work, although recent work \cite{zhu2026dissecting} observes token-entropy spikes at transitions into erroneous reasoning, suggesting a possible link.

We anchor these statements in exact analytical calculations in Gaussian mixture models, where 
score functions and the aforementioned information-theoretic functions are obtained. We also derive the time scales of the semantic and nonlocality windows in this setting. A `thermodynamic limit' (where the data dimension is taken to infinity) closes the semantic and nonlocal windows and yields a sharp \emph{transition}. 
Beyond Gaussian mixtures, we also construct a more general condition for window closure, requiring semantic classes to separate faster than within-class fluctuations as system size grows. Our results unify semantic and architectural perspectives imposed by dynamics along generation trajectories. 

\begin{figure}[t!]
    \centering
    \def\svgwidth{1\textwidth}      
    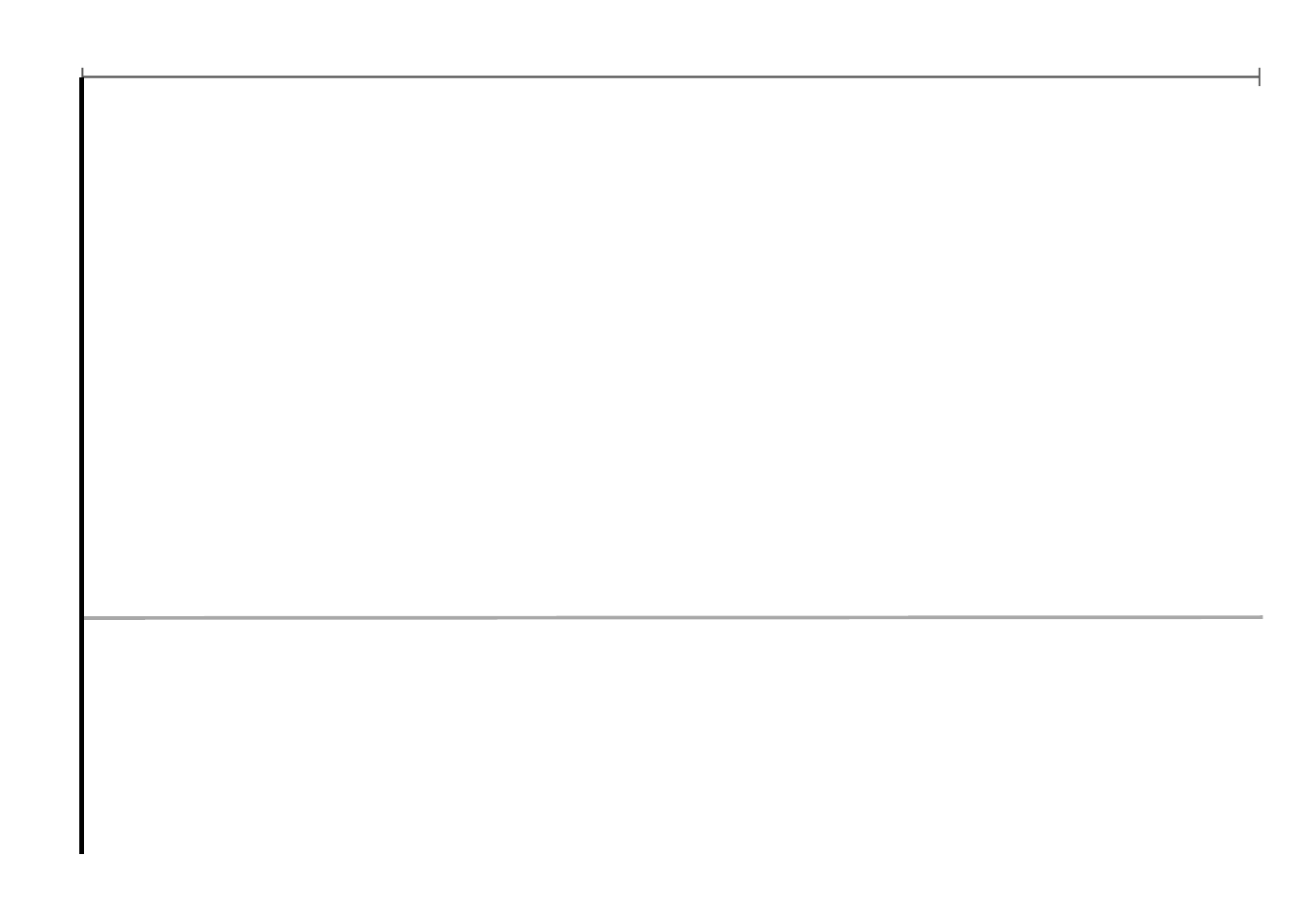
    \caption{\textbf{Semantic speciation locates the window in which local denoising breaks down.} Denoising trajectory is from $t=1$ (right) to $t=0$ (left). The upper panel depicts a semantic speciation from white noise to cat or dog for local region (blue) and global region (magenta); the corresponding thresholds define a speciation window (black vertical lines). The noisy images in the window depict generation times in which the semantic identity is visible in the whole image, but invisible in a local patch (blue box). In the lower row, $A$ is the patch to denoise (green), $B$ its minimal surrounding context (blue) for below-threshold denoising, and $C$ the rest of the image (magenta). The green curve depicts $I_t(A;C\mid B)$ whose peak characterizes a breakdown of exact denoising using local context alone. Under the common-cause hypothesis, Theorem~\ref{thm:window-containment} places the threshold-defined
    nonlocality window inside the speciation window. }
    \label{fig:overview}
\end{figure}

\section{Background and Related Work}

\label{sec:background}

\subsection{Local scores by decaying conditional mutual information}
\label{sec:locality_cmi}

A diffusion model is completely determined by obtaining the score function of the data distribution over diffusion time. If the underlying data distribution is $\pdata$ the score function takes the form $s(x,t) = \nabla_x \log \pdata(x,t)$ where $\pdata(x,t)$ is the data distribution convolved with the Gaussian noise along the diffusion path. We take $X_0 \sim \pdata$ as a sample from the data distribution, and use the interpolation convention with $0 \leq t \leq 1$, where
\begin{equation}
    X_t = (1-t) X_0 + t Z\, , \quad Z \sim \mathcal{N}(0, \ident)\, .
\end{equation}
A brief review of diffusion models is provided in~Appendix~\ref{app:locality_denoising}. 

Our interest is in local diffusion models, in which the generation of a pixel is determined only by its neighbourhood and not need the whole image~\cite{kamb2024analytic,niedoba2024towards,hu2025localdiffusionmodelsphases,hunt2026exact}. If such a task is possible, operationally, we expect the score function to take a local form. We define locality via a tripartition of the image, shown at the left/right of the bottom row of Fig.~\ref{fig:overview}, where the pixel (or small patch) to denoise is denoted $A$, a square annulus around $A$ of radius $r$ is denoted $B$, and the rest of the image is denoted $C$. If $A$ is close to the edges of the image, the parts of the regions outside the image domain is ignored. Images are assumed to live in $\Reals^{d}$, where $d= N\times N$ and $N$ is the linear dimension; we work with square images for simplicity and details of the results do not depend on the aspect ratio. Each image draw $x \in \Reals^{d}$ is then partitioned as $x = (x_A, x_{B}, x_C)$ and as a shorthand, we denote $x_{AB} = (x_A,x_B)$. Thus, implied by the operational constraint, a local score acts on $A$ and depends only on $A$ and $B$ (we use a region $R$ and the pixels $x_R$ in $R$ interchangeably). If the ideal data distribution is known, this would be the score function of the $AB$ marginal, given by $\nabla_A \log p(x_{AB})$ (see~\eqref{eq:local-approximation-gap}). In the absence of the full data distribution, how can such a score be constructed, and when is it useful to do so?

Inspired from work by~\cite{sang_mixed_phase} on mixed-state phases in quantum systems,~\cite{hu2025localdiffusionmodelsphases} bounds the recovery error from using a local denoiser instead of the global one, using the decay length scale of the CMI of the $ABC$ tripartition. The CMI is defined as the remaining mutual information between $A$ and $C$ after revealing the annulus $B$, i.e.,
\begin{equation}
    I(A;C\mid B) = I(A; BC) - I(A; B) \, .
\end{equation}
The CMI, by definition, is the expectation of the Kullback-Leibler (KL) divergence between the joint conditional $p(x_A, x_C \mid x_B)$ and its factorized conditional $p(x_A \mid x_B)p(x_C \mid x_B)$; if the CMI is zero, the joint distribution on ABC  factorizes and the regions form a Markov chain $A - B - C$~\cite{hu2025localdiffusionmodelsphases, zhang2026concurrencesymmetrybreakingnonlocality}. As a result, differentiating the score function on $A$ kills the $C$ dependence completely, where
$\partial_{x_A} \ln p(x) = \partial_{x_A} \ln p_{AB}(x_A, x_B)$ for $p=p_{AB} p_{C|B}$---giving exact locality of the score function. If the CMI is not zero, it still controls the error in denoising a noisy distribution by bounding the total variation. We relegate the careful definition of the quantities in this statement, as well as proving the statement itself (Theorem~\ref{thm:cmi_tv_theorem}) to Appendix~\ref{app:locality_denoising}. 

While the CMI is an information-theoretic quantity that satisfies data-processing inequality in the first two inputs, $A$ and $C$, and thus would decay if only one was noised, it does not satisfy such an inequality on the conditioned variable $B$. Adding noise to $B$ may degrade correlations between $A$ or $C$ and $B$, strengthening the ones between $A$ and $C$, leading to an increase in CMI~\cite{zhang2025conditionalmutualinformationinformationtheoretic}. Along the noise trajectory, CMI may grow too long-ranged while the context window $B$ remains small, such that local denoising incurs larger and larger errors. We call the window in which a local denoiser fails and the global denoiser succeeds the "nonlocality window". 

A problem with CMI as an empirical probe of phenomenology is that information theoretic quantities are notoriously hard to sample in high-dimensional data~\cite{poole2019variationalboundsmutualinformation}. As a consequence, score functions (for diffusion models) are used as operational probes in diagnosing properties of the underlying data structure~\cite{premkumar2026separabilityinformationdiffusionmodels, zhang2026concurrencesymmetrybreakingnonlocality}. While intuitive, is it justified to study the error in the denoiser itself instead of the errors in its outputs? In Appendix~\ref{app:local_scores}, we prove the following bound 
\begin{equation}
\label{eq:locality_gap_cmi}
    \Delta_\loc(t)^4 \leq \frac{4d_A (d_A+3)}{t^4} I_t(A:C\mid B)\, , \quad \Delta_\loc^2(t) = \E_{p_t(x)}\norm{\nabla_A \log p_t(x) - \nabla_A \log p_t(x_{AB})}^2\, , 
\end{equation}
 where $d_A = \dim A$, which implies that the ``locality gap'' $\Delta_\loc$  can be used to probe the CMI. This result complements the results of~\cite{hu2025localdiffusionmodelsphases}: their theorem shows that CMI bounds the error incurred by a local denoiser (see Appendix~\ref{app:locality_denoising} for the careful statement) whereas our result bounds the score matching error in the denoiser \emph{itself}. This establishes that the locality gap itself can serve as a probe of the nonlocality transition, which already was empirically demonstrated by~\cite{zhang2026concurrencesymmetrybreakingnonlocality}.

\subsection{Semantic speciation}
\label{sec:SSB}
Another type of transition is studied through semantic speciation: as forward noise increases, the noisy observation carries less information about the original image’s class. This is commonly probed by forward--backward (FB) experiments, which denoise a corrupted image and assess whether the reconstruction retains its semantic identity~\cite{dynamical_regimes,hierarchical_diffusion,zhang2026concurrencesymmetrybreakingnonlocality}.

An ideal FB experiment draws a clean sample $X_0$ which intrinsically comes with a label $S_0$, adds noise to a region $R$ of the sample to obtain \(X_{R,t}\). Given that observation, it draws a fresh clean image and label from the posterior \(\smash{p(X_{R,0},S\mid X_{R,t})}\). Call the returned label \(\smash{\widehat S}\). Finally, it checks whether \(\smash{\widehat S=S}\). 

Conditioned on \(X_{R,t}\), the original label \(S\) and the returned label \(\widehat S\) are independently drawn from the same distribution
$$ q_s\equiv p_t(s\mid X_{R,t}). $$
Therefore, the label agreement probability for \(X_{R,t}\) is
\begin{align} \Pr(S=\widehat S\mid X_{R,t}) =\sum_s \Pr(S=s\mid X_{R,t}) \Pr(\widehat S=s\mid X_{R,t}) =\sum_s q_s^2. \end{align}
Averaging over the distribution of \(X_{R,t}\) gives the success rate of FB experiment for region \(R\) at time \(t\),
\begin{align}  P_{\mathrm{FB}}(R,t)=\mathbb E\sum_s q_s^2.  \label{eq:fb-rate}\end{align}
Our analysis considers exact posterior sampling and true semantic
labels. If the clean observation determines its label, then
$P_{\mathrm{FB}}(R,0)=1$; at complete noise it approaches
$\sum_s p(s)^2$, equal to $1/L$ for $L$ balanced classes.
Empirical forward--backward experiments, however, need not
reach ideal endpoints because the reverse sampler and classifier
are imperfect. \cite{hierarchical_diffusion}
report a drop in the peak of the source--reconstruction classifier-logit
cosine distribution, while \cite{zhang2026concurrencesymmetrybreakingnonlocality} report a rise in
reconstruction classification error. We also study observations restricted to a spatial region
$R$. Using the same type of classifier-logit cosine probe, our ImageNet~\citep{deng2009imagenet,russakovsky2015imagenet}
crop experiment shows a rapid switch of the empirical peak that occurs
earlier for smaller regions (Appendix~\ref{app:local-fb}).  We call the interval of rapid semantic
change the ``speciation window'', whose starting and ending points are marked by the local and global speciation, respectively, see Section~\ref{sec:spec-window}.

\section{Connecting Speciation and Nonlocality}
\label{sec:PT}

We now connect the nonlocality window, in which local context is
insufficient for accurate denoising, to the speciation window, in
which semantic labels become uncertain under forward noising.
After giving an intuitive argument using score functions, we use an
information-theoretic formulation to establish conditions under
which the nonlocality window is contained in the speciation window.

\subsection{A score-based argument of window containment}\label{sec:score-arg}
Here we provide a heuristic argument using the score function that builds operational intuition for why the nonlocality window is contained in the speciation window.
A natural way to connect the locality gap to semantic information is to subtract the localized version of Bayes' theorem from its global version:
\begin{align}
    \nabla_A \log p_t(x_t) - \nabla_A \log p_t(x_{AB,t})
    ={}&-\big[
    \nabla_A \log p_t(s\mid x_t)
    -\nabla_A \log p_t(s\mid x_{AB,t})
    \big]\notag\\
    &+\big[
    \nabla_A \log p_t(x_t\mid s)
    -\nabla_A \log p_t(x_{AB,t}\mid s)
    \big].
    \label{eq:diff-bayes-rule}
\end{align}
The second bracket is the vector inside the locality gap, conditioned on the
semantic label $s$. In consistency with the common-cause hypothesis~\ref{hyp:common-cause}, we expect an informative label to reduce this
gap by specifying shared aspects of the image's global organization,
leaving less dependence on distant context:
\[
\mathbb{E}\!\left[
\left\|
\nabla_A \log p_t(x_t\mid s)
-\nabla_A \log p_t(x_{AB,t}\mid s)
\right\|_2^2
\right]
<
\mathbb{E}\!\left[
\left\|
\nabla_A \log p_t(x_t)
-\nabla_A \log p_t(x_{AB,t})
\right\|_2^2
\right],
\]
with expectations
taken over noisy images and their semantic labels.

Under this score-gap suppression assumption, a small unconditional
gap implies that both brackets on the right-hand side of
\eqref{eq:diff-bayes-rule} are small in squared expectation.
Conversely, when the unconditional gap is nonzero, the conditional
gap cannot account for it entirely, so the first bracket must also
be nonzero in squared expectation.

To interpret the first bracket, let $s$ be the label of the clean
image. The quantities $p_t(s\mid x_t)$ and $p_t(s\mid x_{AB,t})$ are
the posterior probabilities assigned to that label using global
and local observations, respectively. Their log-gradients measure
how these probabilities respond to perturbations in $A$.
At low noise, both observations can identify the label reliably,
and we expect their semantic posteriors to be relatively insensitive
to small perturbations. At high noise, both observations become
uninformative about the label, and their posteriors approach the
prior. The difference in posterior responses is therefore expected
to be most pronounced between the loss of reliable local label
information and the loss of global label information.
This motivates the conjecture that the nonlocality window lies
within the speciation window. Section~\ref{sec:info-pt} makes this
connection precise using mutual information and an explicit
common-cause assumption.

\subsection{The information-theoretic definition of windows}
\label{sec:info-pt}
\subsubsection{Nonlocality window}
As shown by~\citet{hu2025localdiffusionmodelsphases} (and reproduced in our notation in Appendix~\ref{app:locality_denoising}), the conditional mutual information $I_t(A;C\mid B)$ bounds the error of local recovery. Therefore we define the breakdown window of local denoising as the window where CMI is above a threshold.
\begin{definition}[Nonlocality window]\label{defn:nonlocality_times}
For an annulus tripartition $ABC$, and an information tolerance $\delta>0$, define the start and end of the nonlocality window as
    \begin{align}
    t_{\mathrm{nonloc}}^{\mathrm{start},\delta}
    &:=\inf\{t\in[0,1]:I_t(A;C\mid B)>\delta\},\label{eq:cmi-start}\\
    t_{\mathrm{nonloc}}^{\mathrm{end},\delta}
    &:=\sup\{t\in[0,1]:I_t(A;C\mid B)>\delta\}.\label{eq:cmi-end}
\end{align}
\end{definition}

\subsubsection{Speciation window}\label{sec:spec-window}

We show that the mutual information \(I_t(S; R)\) has the same window where it sharply decreases from the maximal \(H(S)\) to the minimal value \(0\), using the following lemma proved in Appendix~\ref{app:sandwich-proof}.

\begin{lemma}[Two-sided sandwich bounds]\label{lem:sandwich}
    Let $p(s)$ be the prior distribution of labels $s\in S$, $\abs{S} = L$, and write the posterior as $ q_s\equiv p_t(s\mid x_{R,t}) $. Let \( P_{\mathrm{FB}}(R,t)\) denote the success probability of a forward-backward experiment, as described in~\eqref{eq:fb-rate}. On one hand, we have 
    \begin{align}
        \frac12\E\sum_s(q_s-p(s))^2\leq I_t(S; R) \leq \sum_s \E\frac{(q_s-p(s))^2}{p(s)}. \label{eq:sandwich-0}
    \end{align}
    On the other hand, 
    \begin{align}
        \log P_{\mathrm{FB}}(R,t) \ge {I_t(S;R)}-H(S)\ge -h_2\left(P_{\mathrm{FB}}(R,t)\right)-\left(1- P_{\mathrm{FB}}(R,t)\right)\log(L-1) ,  \label{eq:sandwich-1}
    \end{align}
    where $h_2(p)=-p\log{p}-(1-p)\log(1-p)$ is the binary entropy.
\end{lemma}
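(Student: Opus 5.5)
The plan is to write the mutual information as an average Kullback--Leibler divergence between posterior and prior,
\[
I_t(S;R)=\E_{x_{R,t}}\,\KL\bigl(q\,\|\,p\bigr),\qquad q=(q_s)_s,\; p=(p(s))_s,
\]
and then sandwich this divergence pointwise before taking the expectation over $x_{R,t}$.

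For the first inequality in \eqref{eq:sandwich-0}:
\begin{itemize}
\item \textbf{Lower bound.} Pinsker's inequality gives $\KL(q\|p)\ge 2\,\TV(q,p)^2=\tfrac12\bigl(\sum_s|q_s-p(s)|\bigr)^2$. Since the $\ell_1$ norm dominates the $\ell_2$ norm, this is at least $\tfrac12\sum_s(q_s-p(s))^2$. Taking expectations yields the left inequality.
\item \textbf{Upper bound.} Use $\log u\le u-1$ to get
\[
\KL(q\|p)=\sum_s q_s\log\frac{q_s}{p(s)}\le \sum_s q_s\Bigl(\frac{q_s}{p(s)}-1\Bigr)=\sum_s\frac{q_s^2}{p(s)}-1=\chi^2(q\|p).
\]
Expanding shows $\sum_s \frac{(q_s-p(s))^2}{p(s)}=\sum_s\frac{q_s^2}{p(s)}-2\sum_s q_s+\sum_s p(s)$, which equals the same $\chi^2$ value because both $q$ and $p$ sum to one. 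Taking expectations finishes \eqref{eq:sandwich-0}. Labels with $p(s)=0$ can be discarded, since then $q_s=0$ almost surely.
\end{itemize}

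For \eqref{eq:sandwich-1}, write $I_t(S;R)-H(S)=-H(S\mid X_{R,t})=\E\sum_s q_s\log q_s$.
\begin{itemize}
\item \textbf{Left inequality.} Jensen's inequality for the concave $\log$, applied with weights $q_s$, gives pointwise $\sum_s q_s\log q_s\le \log\sum_s q_s^2$. Applying Jensen again over $x_{R,t}$ gives $\E\log\sum_s q_s^2\le\log\E\sum_s q_s^2=\log P_{\mathrm{FB}}(R,t)$.
\item \textbf{Right inequality.} This should follow from Fano's inequality, but we must choose the right estimator. I would treat $\widehat S$, drawn from $q$ independently of $S$ given $X_{R,t}$, as an estimator of $S$. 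Then $S - X_{R,t} - \widehat S$ is a Markov chain, the error probability is $P_e=1-P_{\mathrm{FB}}$, and $H(S\mid X_{R,t})\le H(S\mid\widehat S)$ by data processing. Fano then gives $H(S\mid X_{R,t})\le h_2(P_e)+P_e\log(L-1)$. Finally note $h_2(P_e)=h_2(P_{\mathrm{FB}})$.
\end{itemize}

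The main point requiring care is this last step. A randomized estimator is admissible in Fano's inequality precisely because of the Markov chain, so we avoid needing the MAP estimator, whose error would differ from $1-P_{\mathrm{FB}}$.
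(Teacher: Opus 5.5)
Your proposal is correct and takes essentially the same route as the paper: Pinsker's inequality and $\log u\le u-1$ give the KL sandwich, Jensen's inequality applied twice gives the $\log P_{\mathrm{FB}}$ bound, and Fano's inequality with the randomized posterior-sample estimator $\widehat S$ gives the last bound. The only cosmetic difference is in that last step. The paper re-derives Fano inline through the error indicator $E$ and the exact identity $H(S\mid X_{R,t})=H(S\mid X_{R,t},\widehat S)$, whereas you cite Fano for $H(S\mid \widehat S)$ and reach it by data processing along $S - X_{R,t} - \widehat S$.
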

The first inequality~\ref{eq:sandwich-0} applies to the zero--information endpoint: $I_t(S;R)=0$ if and only if $q_s-p(s)=0$ for all $s$ and $x_{R,t}$, reducing the posterior-sampling FB experiment to prior sampling. The second inequality~\ref{eq:sandwich-1} applies to the full-information endpoint, i.e., $I_t(S;R)=H(S)$ if and only if we succeed at \( P_{\mathrm{FB}}(R,t) =1\). Therefore, \(P_{\mathrm{FB}}(R,t)\) and \(I_t(S;R)\) share the same window where they both drop from their respective maxima to minima. 

The mutual information for a global region $I_t(S;ABC)$ must be greater than that for a local region $I_t(S;B)$ by data processing inequality, indicating that $I_t(S;B)$ has an earlier drop than $I_t(S;ABC)$. Therefore, we choose our definition for speciation window taking this locality nuance into consideration. As the score function arguments in Section~\ref{sec:score-arg} suggest, we define the speciation window starting with nonzero local response to $A$ and ending with global response to $A$.
\begin{definition}[Speciation window]
    For an annulus tripartition $ABC$, and an information tolerance $\delta>0$, define the start and end of the speciation window as
    \begin{align}
        t_{\mathrm{spec}}^{\mathrm{start},\delta}
    &:=\inf\{t\in[0,1]:I_0(S;AB)-I_t(S;B)\geq\delta\},\label{eq:sb-start}\\
    t_{\mathrm{spec}}^{\mathrm{end},\delta}
    &:=\sup\{t\in[0,1]:I_t(S;ABC)\ge\delta\}.\label{eq:sb-end}
    \end{align}
\end{definition}
The start time indicates the earliest time when local label recognition ability drops by a certain amount, and the end time indicates the latest time when global recognition ability retains that amount.

\subsubsection{Common-cause hypothesis}
\begin{figure*}[ht]
\centering
\includegraphics[width=\textwidth]{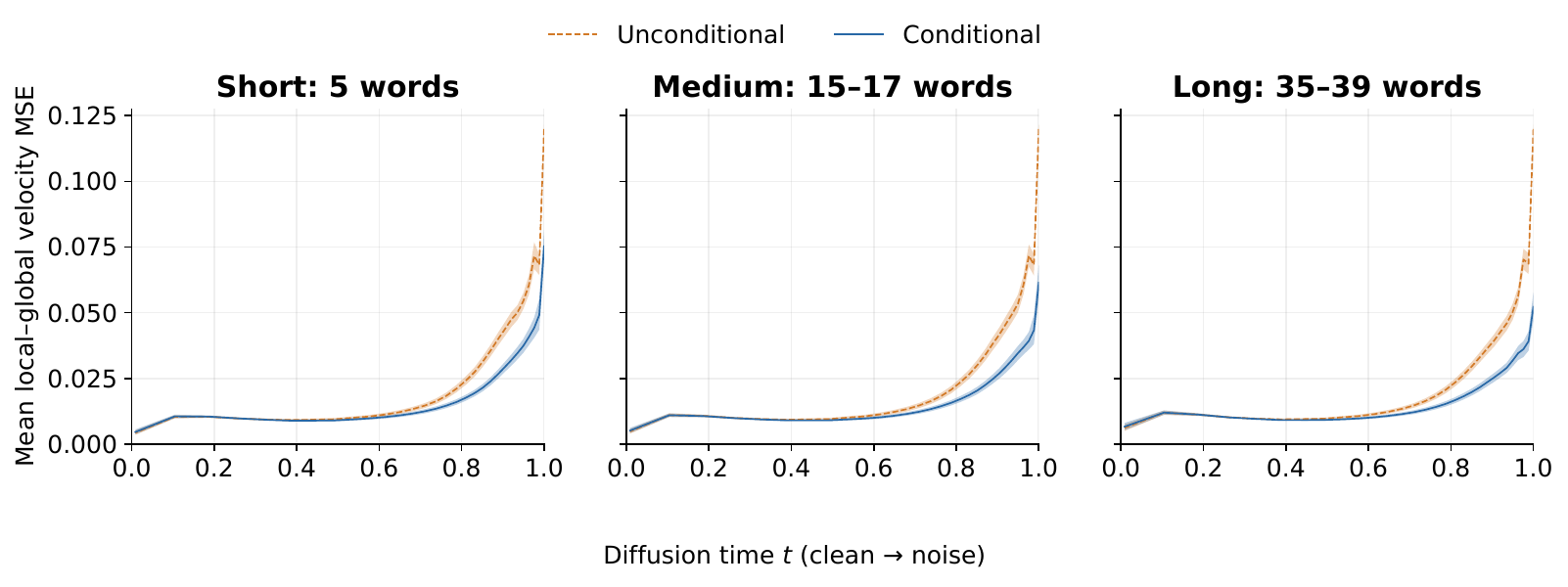}
\caption{
Semantic conditioning reduces the average local--global prediction gap in SD3.
A smaller conditional gap means that restricting distant image-token
interactions changes the model prediction less when semantic information
is supplied. This is consistent with the common-cause interpretation:
the description explains part of the shared image structure that would
otherwise require distant context.
}
\label{fig:common-cause}
\end{figure*}
Conditioning on semantic labels can act in two ways. One is synergistic, where it increases conditional mutual information, \[ I(A;C\mid B,S) \ge I(A;C\mid B). \] 
This occurs when label $S$ contains information that can only be determined by $A$ and $C$ together, but not from any of them alone. For example, count of total objects or parity of a set of bits. The other is redundant, where it decreases conditional mutual information, \[ I(A;C\mid B,S) \le I(A;C\mid B). \]
This occurs when label $S$ acts as a common cause for $A$ and $C$, such as the label of a cat image, which explains the correlation between the cat's head and tail.
We assume that the latter is the case for natural datasets, and we postulate the following \textit{common-cause hypothesis}.
\begin{hypothesis}[Common-cause hypothesis]
    \label{hyp:common-cause}
    For an annulus tripartition $ABC$, semantic labels $S$, and a positive constant $0<\alpha\le 1$, we have
    \begin{align}
        I_t(A;C\mid B,S) \le (1-\alpha) I_t(A;C\mid B)
    \end{align}
    for all time $t\in [0,1]$.
\end{hypothesis}
Such labels always exist: consider the extreme case where the label $S$ specifies the entire clean image, then $I(A;C\mid B,S)=0$, in which case we have $\alpha=1$. In general, we expect that the more informative the label is, the larger $\alpha$ is.  

Gaussian mixtures with identity covariance satisfy this hypothesis for the labels matching the mixture labels at $\alpha=1$ as shown in Appendix~\ref{app:gmm_calculations}. For natural datasets, Figure~\ref{fig:common-cause} provides empirical evidence consistent
with the common-cause interpretation: semantic conditioning reduces
the average local--global prediction gap in SD3, suggesting that
shared semantic information reduces reliance on distant image
context. We average over 64 scenes described using 5, 15--17, and 35--39
words, respectively, giving 192 prompts in total and each prompt uses
three random seeds. We observe a time-weighted gap reduction of 20.7\%, 23.4\%, and 24.4\%
for short, medium, and long descriptions (see Appendix~\ref{app:common-cause-experiment} for the
experimental setup, score parameterization).

\subsubsection{Nonlocality window is contained in speciation window}
The two windows are defined using different information quantities. The common-cause hypothesis connects them.
\begin{theorem}[Containment of the nonlocality window]\label{thm:window-containment}
Fix an annulus tripartition $ABC$. Suppose that, for all $t\in[0,1]$, $S\to X_{AB,0}\to X_{AB,t}$ is a Markov chain and Hypothesis~\ref{hyp:common-cause} holds with the same $\alpha>0$. Choose $0<\delta<\sup_{t\in[0,1]}I_t(A;C\mid B)$. Then
\begin{equation}\label{eq:window-containment}
t_{\mathrm{spec}}^{\mathrm{start},\alpha\delta}
\leq t_{\mathrm{nonloc}}^{\mathrm{start},\delta}
\leq t_{\mathrm{nonloc}}^{\mathrm{end},\delta}
\leq t_{\mathrm{spec}}^{\mathrm{end},\alpha\delta}.
\end{equation}
\end{theorem}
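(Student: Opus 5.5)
The plan is to turn Hypothesis~\ref{hyp:common-cause} into a pointwise-in-time statement: whenever the conditional mutual information is large, both speciation quantities are large. The window inequalities then follow directly from the $\inf$/$\sup$ definitions. Everything is evaluated at a single fixed $t$, with $S$ jointly distributed with the noised variables $X_{A,t},X_{B,t},X_{C,t}$.

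\textbf{Step 1 (interaction-information identity).} Expanding $I_t(A;CS\mid B)$ by the chain rule in two orders gives
\[
I_t(A;C\mid B)-I_t(A;C\mid B,S)=I_t(S;A\mid B)-I_t(S;A\mid B,C).
\]
Expanding $I_t(C;AS\mid B)$ the same way gives
\[
I_t(A;C\mid B)-I_t(A;C\mid B,S)=I_t(S;C\mid B)-I_t(S;C\mid A,B).
\]
The hypothesis says the left-hand side is at least $\alpha I_t(A;C\mid B)$. Dropping the nonnegative subtracted terms then yields
\[
\alpha I_t(A;C\mid B)\le I_t(S;A\mid B)=I_t(S;AB)-I_t(S;B)
\]
and
\[
\alpha I_t(A;C\mid B)\le I_t(S;C\mid B)\le I_t(S;BC)\le I_t(S;ABC).
\]
The last two inequalities are the chain rule and monotonicity of mutual information in its second argument.

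\textbf{Step 2 (using the Markov chain).} Since $S\to X_{AB,0}\to X_{AB,t}$ is Markov, the data processing inequality gives $I_t(S;AB)\le I_0(S;AB)$. Combined with Step 1, this yields
\[
\alpha I_t(A;C\mid B)\le I_0(S;AB)-I_t(S;B).
\]

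\textbf{Step 3 (windows).} Let $T=\{t\in[0,1]: I_t(A;C\mid B)>\delta\}$. Because $\delta<\sup_t I_t(A;C\mid B)$, the set $T$ is nonempty, which gives the middle inequality $\inf T\le \sup T$.

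For any $t\in T$, Step 2 gives $I_0(S;AB)-I_t(S;B)>\alpha\delta$. So $t$ belongs to the set defining $t_{\mathrm{spec}}^{\mathrm{start},\alpha\delta}$, hence $t_{\mathrm{spec}}^{\mathrm{start},\alpha\delta}\le t$. Taking the infimum over $T$ gives the first inequality.

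Likewise, Step 1 gives $I_t(S;ABC)>\alpha\delta$ for every $t\in T$. So $t\le t_{\mathrm{spec}}^{\mathrm{end},\alpha\delta}$, and taking the supremum over $T$ gives the last inequality.

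The only genuinely nontrivial step is Step 1: choosing the right chain-rule expansions so that the redundancy $I(A;C\mid B)-I(A;C\mid BS)$ is bounded by the \emph{local} quantity $I(S;A\mid B)$ on one side and the \emph{global} quantity $I(S;ABC)$ on the other. Step 2 needs care only in checking that the Markov assumption is exactly what replaces $I_t(S;AB)$ by $I_0(S;AB)$. No continuity in $t$ is required, because only the $\inf$/$\sup$ definitions are used.
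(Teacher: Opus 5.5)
Your proposal is correct and follows essentially the paper's proof. Your two interaction-information expansions, $I_t(S;A\mid B)-I_t(S;A\mid B,C)$ and $I_t(S;C\mid B)-I_t(S;C\mid A,B)$, are exactly the paper's two regroupings of $I_t(S;AB)-I_t(S;B)-I_t(S;ABC)+I_t(S;BC)$, and you bound them the same way: data processing along the noising channel for the local bound, and monotonicity of mutual information for the global bound. Your direct step, showing that every $t$ with $I_t(A;C\mid B)>\delta$ lies in both speciation sets, is just the contrapositive of the paper's argument that the CMI stays below $\delta$ before the speciation start and after its end.
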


\begin{proof}
The common-cause hypothesis gives, at each time,
\begin{equation}\label{eq:containment-common-cause}
\alpha I_t(A;C\mid B)
\leq I_t(A;C\mid B)-I_t(A;C\mid B,S).
\end{equation}
By the chain rule, the difference on the right has two useful forms. The first is
\begin{align}
I_t(A;C\mid B)-I_t(A;C\mid B,S)
&=I_t(S;AB)-I_t(S;B)-I_t(S;ABC)+I_t(S;BC)\notag\\
&\leq I_0(S;AB)-I_t(S;B).\label{eq:i3-chainrule1}
\end{align}
The last step uses $I_t(S;AB)\leq I_0(S;AB)$, by data processing along the forward noising channel, and $I_t(S;ABC)\geq I_t(S;BC)$, by spatial data processing. Rearranging the same four mutual informations gives the second form,
\begin{align}
I_t(A;C\mid B)-I_t(A;C\mid B,S)
&=I_t(S;BC)-I_t(S;B)-I_t(S;ABC)+I_t(S;AB)\notag\\
&\leq I_t(S;ABC).\label{eq:i3-chainrule2}
\end{align}
Here $I_t(S;B)\geq0$ and $I_t(S;ABC)-I_t(S;AB)\geq0$, while $I_t(S;BC)\leq I_t(S;ABC)$.

Before $\smash{t_{\mathrm{spec}}^{\mathrm{start},\alpha\delta}}$, the definition of the speciation start gives $I_0(S;AB)-I_t(S;B)<\alpha\delta$. Equations~\ref{eq:containment-common-cause} and~\eqref{eq:i3-chainrule1} then give $I_t(A;C\mid B)<\delta$. After $\smash{t_{\mathrm{spec}}^{\mathrm{end},\alpha\delta}}$, the definition of the speciation end gives $I_t(S;ABC)<\alpha\delta$; equations~\ref{eq:containment-common-cause} and~\ref{eq:i3-chainrule2} again give $I_t(A;C\mid B)<\delta$. Thus every time at which CMI exceeds $\delta$ lies between the speciation endpoints. Since $\delta$ is below the CMI supremum, this set is nonempty. Taking its infimum and supremum proves the claim.
\end{proof}

\subsection{System size scaling of two windows and phase transitions}

The containment theorem~\ref{thm:window-containment} relates the two windows, but does not determine how their widths scale with system size. 
This scaling describes whether a finite-size crossover sharpens into a phase transition.
To get a sense of how the windows scale with system size in simple distributions, we explicitly study the windows for a two-component Gaussian mixture. Relegating the calculational details to Appendix~\ref{app:gmm_calculations}, we only quote the analytical results. We study the distribution $p(x,s)$ where $p(x\mid s) \propto \exp(-(x-\mu_s)^2/2\sigma^2)$ and $p(s = \pm 1) = 1/2$, taking $\smash{\mu_s = s \vec{1} \in \Reals^d}$. The scalings for the two windows for this distribution are 
\begin{equation}\label{eq:gmm_times}
1-t_{\mathrm{spec}}^{\mathrm{start},\alpha\delta}
\asymp 
\frac{\sqrt{\log N}}{N}  \,, \;
1-t_{\mathrm{nonloc}}^{\mathrm{start},\delta} 
\asymp
\frac{1}{N}\, ,   \;
1-t_{\mathrm{nonloc}}^{\mathrm{end},\delta}
\asymp
\frac{1}{N}\,,   \;
1-t_{\mathrm{spec}}^{\mathrm{end},\alpha\delta} 
\asymp
\frac{1}{N^2},
\end{equation}
which all converge to $t=1$ in the large-$N$ limit. The intuition behind these scalings follow from the simple fact that decoding success is determined by the signal-to-noise ratio $\norm{\mu_s(t)}/\sigma(t)$. For $\mu_s$ we chose, the signal in $B$ or $BC$ is extensive $\norm{\mu_s} \propto N (1-t)$, and thus decoding becomes ambiguous at $1-t = O(1/N)$. Going left to right (choosing $\delta \sim 1/N^2$ to get a nontrivial CMI) the start of the speciation window is marked by when the classifier using $B$ loses $\delta$ amount of information, whose error is given by a Gaussian tail; setting this equal to the loss yields a $\sqrt{\log 1/\delta}$ in addition to the signal to noise factor $1/N$. The locality windows are about a similar but a different question: when is decoding hard with $B$ only but easy with $BC$? Since the signal is extensive in each, both locality windows directly inherit $1-t = O(1/N)$. Lastly, in the end of the speciation window, we are in the weak signal regime, and even the global classifier is weak. From a quadratic approximation to the mutual information $I(S:ABC)$ integral as $N^2(1-t)^2$, which needs to be equal to $\delta$, we get $1-t = O(1/N^2)$. In summary, the mechanism behind this sharpening can be understood through the decoding transition of the Gaussian mixture model, which behaves like a soft repetition code, where each local patch carries partial information about the semantic label, and a growing buffer combines these clues. 

Inspired by the Gaussian mixture results, we give sufficient conditions for both windows to sharpen to a common critical point at pure noise, $t=1$, as system size grows. The theorem below makes this condition for local semantic identification precise and generalizes the calculation restricted to the Gaussian mixture model.

\begin{theorem}(informal)
\label{thm:separated-means}
    Let $m_s=\mathbb E[X_{B,0}\mid S=s]$ be the mean image on the buffer for label $s$, and write
\begin{equation*}
D_N=\min_{s\ne s'}\|m_s-m_{s'}\|^2.
\end{equation*}
Suppose within-class fluctuations have Gaussian-type tails with variance scale at most $C_N$ along every unit direction.
Thus $D_N$ measures the separation of the class means, while $C_N$ measures the fluctuation that can obscure this separation.
For a fixed finite label set, assume the common-cause hypothesis holds with size-independent $\alpha>0$ and use $X_t=(1-t)X_0+tZ$.
If $D_N/(C_N+1)\to\infty$, then both windows shrink toward $t=1$, provided they exist and their defining information thresholds do not decrease too rapidly with system size relative to this growing separation-to-noise ratio.
The precise regularity condition on the thresholds and the full assumptions are given in Appendix~\ref{app:separated-means}.
\end{theorem}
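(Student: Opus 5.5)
The plan is to show that, under $D_N/(C_N+1)\to\infty$, every window endpoint satisfies $1-t\to0$, and then use Theorem~\ref{thm:window-containment} to transfer this to the nonlocality window. Containment gives $t_{\mathrm{spec}}^{\mathrm{start},\alpha\delta}\le t_{\mathrm{nonloc}}^{\mathrm{start},\delta}\le t_{\mathrm{nonloc}}^{\mathrm{end},\delta}\le t_{\mathrm{spec}}^{\mathrm{end},\alpha\delta}\le 1$, with $\alpha$ independent of $N$. It therefore suffices to show that the \emph{speciation start} converges to $1$. Concretely, I will prove that for every fixed $\tau<1$, once $N$ is large, $I_0(S;AB)-I_t(S;B)<\alpha\delta_N$ holds for all $t\le\tau$. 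This forces $t_{\mathrm{spec}}^{\mathrm{start},\alpha\delta_N}>\tau$, and the speciation end is then squeezed toward $1$ as well.

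\textbf{Step 1: the clean label is determined.} Since $D_N\to\infty$ relative to $C_N$, the mean-based classifier on $X_{B,0}$ has vanishing error. Fano's inequality then gives $H(S)-I_0(S;B)\to 0$, and therefore $I_0(S;AB)-I_t(S;B)\le H(S)-I_t(S;B)=H(S\mid X_{B,t})$.

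\textbf{Step 2: bound $H(S\mid X_{B,t})$ uniformly in $t\le\tau$.} I will use the nearest-class-mean classifier on the rescaled observation $Y=X_{B,t}/(1-t)=X_{B,0}+\tfrac{t}{1-t}Z$. For a pair $s\ne s'$, project onto the unit direction $u=(m_s-m_{s'})/\|m_s-m_{s'}\|$. The noise along $u$ is sub-Gaussian with variance proxy at most
\[
C_N+\bigl(\tfrac{t}{1-t}\bigr)^2\le C_N+\bigl(\tfrac{\tau}{1-\tau}\bigr)^2 .
\]
The pairwise error is therefore at most $\exp\!\bigl(-c\,D_N/(C_N+K_\tau)\bigr)$ with $K_\tau=(\tau/(1-\tau))^2$. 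A union bound over the $L^2$ pairs gives an error probability $P_e\le L^2\exp(-c'D_N/(C_N+1))$, where $c'$ depends on $\tau$. Fano's inequality then yields
\[
H(S\mid X_{B,t})\le h_2(P_e)+P_e\log(L-1),
\]
which decays essentially like $\exp(-c'D_N/(C_N+1))\cdot D_N/(C_N+1)$, uniformly for $t\le\tau$. Alternatively, the sandwich Lemma~\ref{lem:sandwich} gives the same conclusion through $P_{\mathrm{FB}}\ge 1-P_e$.

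\textbf{Step 3: the threshold regularity condition.} This is where the unspecified regularity assumption on thresholds enters. I will state it as $\log(1/\delta_N)=o\!\bigl(D_N/(C_N+1)\bigr)$, so that the bound from Step 2 is eventually below $\alpha\delta_N$. This gives $t_{\mathrm{spec}}^{\mathrm{start}}\to1$, and containment (applicable whenever the windows exist, i.e. $\delta_N<\sup_t I_t(A;C\mid B)$) sends every endpoint to $1$.

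\textbf{Expected obstacle.} The main difficulty is Step 2's uniformity. The variance factor $t/(1-t)$ blows up as $t\to1$, so the argument only works for fixed $\tau$, and the rate $c'$ degrades with $\tau$. One also has to handle the sub-Gaussian tail assumption on $X_{B,0}-m_s$ along a single direction carefully, and make the Fano step quantitative in a way that matches the chosen threshold condition. I would first try to make the exponent explicit, $c'=\tfrac18(1-\tau)^2$, so that one can state the regularity condition cleanly as $\log(1/\delta_N)\ll (1-\tau)^2D_N/(C_N+1)$.
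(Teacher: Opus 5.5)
Your proposal is correct and follows essentially the paper's route. It uses the nearest-class-mean Chernoff bound from the sub-Gaussian tail plus the independent Gaussian noise, a union bound, and a conversion of the error probability into a bound on $H(S\mid X_{B,t})$; you use Fano directly, while the paper goes through $P_{\mathrm{FB}}$ and Lemma~\ref{lem:sandwich}, which amounts to the same thing. It then uses $I_0(S;AB)-I_t(S;B)\le H(S\mid X_{B,t})$ and finishes with Theorem~\ref{thm:window-containment}. That inequality holds trivially because $I_0(S;AB)\le H(S)$, so your Step 1 is not needed.

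The uniformity obstacle you anticipate disappears if you do not rescale. Since $C_N(1-t)^2+t^2\le C_N+1$, you get the $t$-uniform bound $\exp[-(1-t)^2D_N/(8(C_N+1))]$. The paper then takes $t_N=1-b\sqrt{\log[H(S)/(\alpha\delta_N)]/(D_N/(C_N+1))}$ instead of your fixed $\tau$, which gives the explicit rate of the formal version rather than only convergence.
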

When $D_N/(C_N+1)$ grows, the semantic signal in the buffer increasingly dominates the within-class fluctuations and the added diffusion noise.
As in a repetition code, the label can then be read from the buffer with vanishing error at any fixed $t<1$; only at pure noise is the signal completely lost.

Reliable decoding makes the remaining label uncertainty $H(S\mid X_{B,t})$ small.
The common-cause hypothesis and the chain rule then give
\begin{equation*}
\alpha I_t(A;C\mid B)
\le I_t(S;A\mid B)-I_t(S;A\mid B,C)
\le H(S\mid X_{B,t}).
\end{equation*}
Thus, when the buffer already identifies the label, the CMI is small.
Moreover, $I_0(S;AB)-I_t(S;B)\le H(S\mid X_{B,t})$, so the speciation window cannot start until the buffer begins to lose the label.
The quantitative decoding bound pushes this start toward $t=1$ at the allowed thresholds, and Theorem~\ref{thm:window-containment} places the other three endpoints between that start and $1$.

\section{Conclusion}

We connect semantic speciation and nonlocality through the information shared between distant regions of a sample through our common-cause hypothesis.
Under this hypothesis, we prove that the nonlocality window lies within the speciation window.
This connection interprets semantics as shared global information and speciation as its decoding during generation.
After demonstrating that both windows close for Gaussian mixture data distributions as system size grows, inspired by its properties, we further give a sufficient condition for both windows to sharpen to a common critical point at pure noise as system size grows.

These results open new questions about how semantic information shapes nonlocality and phase transitions in generative models. The common-cause and semantic-decoding assumptions can be tested in graphical models and real datasets. These studies can reveal how spatial structure affects when the transitions occur and how sharply they develop. The theory can also guide when denoisers use distant context and how their receptive fields change during generation. We leave these directions to future work.

\subsubsection*{Acknowledgments}

We acknowledge assistance from generative AI tools for writing, coding and verifying mathematical claims and proofs.
G.\,L.\, would like to thank Jialiang Zhang and Ruohua Li for discussion on FlexAttention implementation. G.\,L.\,and X.\,G.\,acknowledge support from NSF PFC grant No. PHYS 2317149.
F.\,H.\,acknowledges support from the QuEra Quantum Innovation Postdoctoral Fellowship.

\bibliographystyle{abbrvnat}
\bibliography{refs}

\appendix

\section{Local Denoising Models and Error Bounds}
\label{app:locality_denoising}

In this appendix, we review and advance various aspects of local diffusion models. First, we review a sufficient condition for locality of a denoiser in terms of the conditional mutual information, proven already in~\cite{hu2025localdiffusionmodelsphases}, for the reader's convenience. Second, specialising to Gaussian noise relevant for diffusion models, we prove that the same quantity also bounds the locality gap---a quantity introduced in~\cite{zhang2026concurrencesymmetrybreakingnonlocality} to probe the locality transition. This directly connects the observations therein to the CMI, and supports the notion that locality gap is sensitive to the locality transition. Noting that mutual informations are very hard to sample, training a diffusion model and measuring the locality gap provides a way of controlling CMI, which follows the same spirit as~\cite{yu2025mmgmutualinformationestimation}.

\subsection{Denoising generative models}
Denoising generative models learn to recover clean data from corrupted observations. Let $p, q : \Omega \to \Reals$ be probability densities where $\Omega$ is the space of events. Let $\mathcal{N}(y\mid x)$ be a noise channel. It induces a noisy version of a data density $p$ as 
\begin{equation}
    \mathcal{N}(p)(y) = \int_{\Omega} \thed x \, \mathcal{N}(y \mid x ) p(x)\, .
\end{equation}

$\mathcal{B}_{\mathcal{N}, q}(y\mid x)$ is its Bayes recovery channel with prior $q$, defined as 
\begin{equation}
    \mathcal{B}_{\mathcal{N}, q}(y\mid x) = \frac{\mathcal{N}(y \mid x) q(x)}{\mathcal{N}(q)(y)}\, .
\end{equation}

Generation begins from a tractable, highly corrupted distribution and successively applies learned approximations to such reverse channels. This framework includes autoregressive models, for which the forward channel progressively masks a suffix of the variables and the reverse process reveals \(X_i\) according to \(p(X_i\mid X_{<i})\), as well as diffusion models, for which the corruption is gradual, typically Gaussian, and the reverse dynamics are parameterized through a denoiser or score function. The results below are first stated for a generic noise channel and then specialized to diffusion models.

\subsection{Bounding local recovery error with CMI (generic denoising model)}

In this section, we prove that CMI bounds the squared total variance (TV) between a distribution and its noised and then locally denoised version. We will not try be rigorous in the measure-theoretic sense, but the statements can be formalised as such if desired. 

The nontrivial ingredient in proving the desired result is the classical Fawzi-Renner inequality from~\cite{Fawzi_2015}, which we now state.
\begin{proposition}[Classical Fawzi-Renner Inequality]
    Let $\hat{p}:\Omega \to \Reals$ be the distribution denoised with prior $q$ after applying the noise channel, i.e., 
    \begin{equation*}
        \hat{p}(x) \coloneq [\mathcal{B}_{\mathcal{N}, q} (\mathcal{N}(p))](x)\, .
    \end{equation*}
    then, 
    \begin{equation}
        D_\KL(p \Vert q) - D_\KL(\mathcal{N}(p) \Vert \mathcal{N}(q)) \geq D_\KL(p \Vert \hat{p})\, , 
    \end{equation}
    where $D_\KL$ is the Kullback-Leibler divergence/relative entropy. 
\end{proposition}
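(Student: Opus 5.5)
The plan is to prove the classical Fawzi--Renner inequality by combining the chain rule for relative entropy on the joint space of clean and noisy variables with convexity of the KL divergence. Concretely, I will introduce the two joint densities
\begin{equation*}
P(x,y)=\mathcal{N}(y\mid x)p(x),\qquad Q(x,y)=\mathcal{N}(y\mid x)q(x),
\end{equation*}
and expand $D_\KL(P\Vert Q)$ in two ways via the chain rule. Conditioning on $x$ first, the conditionals $\mathcal{N}(\cdot\mid x)$ coincide, so $D_\KL(P\Vert Q)=D_\KL(p\Vert q)$. Conditioning on $y$ first, the $y$-marginals are $\mathcal{N}(p)$ and $\mathcal{N}(q)$. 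The conditionals are the Bayes posteriors $\mathcal{B}_{\mathcal{N},p}(\cdot\mid y)$ and $\mathcal{B}_{\mathcal{N},q}(\cdot\mid y)$. This gives
\begin{equation*}
D_\KL(p\Vert q)-D_\KL(\mathcal{N}(p)\Vert\mathcal{N}(q))=\int \thed y\,\mathcal{N}(p)(y)\,D_\KL\big(\mathcal{B}_{\mathcal{N},p}(\cdot\mid y)\,\Vert\,\mathcal{B}_{\mathcal{N},q}(\cdot\mid y)\big),
\end{equation*}
an exact identity for the left-hand side.

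Next I will lower bound the right-hand side using joint convexity of KL, applied as Jensen over $y\sim\mathcal{N}(p)$. Averaging the first arguments gives
\begin{equation*}
\int \thed y\,\mathcal{N}(p)(y)\,\mathcal{B}_{\mathcal{N},p}(x\mid y)=\int \thed y\,\mathcal{N}(y\mid x)p(x)=p(x),
\end{equation*}
so the $p$-recovery channel undoes the noise exactly. Averaging the second arguments gives
\begin{equation*}
\int \thed y\,\mathcal{N}(p)(y)\,\mathcal{B}_{\mathcal{N},q}(x\mid y)=\hat p(x),
\end{equation*}
which is exactly $[\mathcal{B}_{\mathcal{N},q}(\mathcal{N}(p))](x)$. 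Joint convexity then yields $D_\KL(p\Vert\hat p)$ as a lower bound, which proves the proposition.

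The only delicate step is justifying the two chain-rule decompositions and the application of convexity at the level of densities. This requires absolute continuity, $p\ll q$; otherwise the left side is $+\infty$ and the inequality is trivial. It also requires finiteness of $D_\KL(\mathcal{N}(p)\Vert\mathcal{N}(q))$, so that the subtraction is meaningful. In keeping with the paper's non-measure-theoretic stance, I will state these as standing assumptions and treat the rest as routine. I also need to keep the index convention straight: $\mathcal{B}(y\mid x)$ in the paper's notation denotes the posterior density of the clean variable given the noisy one.
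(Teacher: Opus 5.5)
Your proof is correct. The paper gives no argument of its own: it states the proposition and attributes it to Fawzi and Renner, whose main theorem is a quantum statement about conditional mutual information with a fidelity-type recovery guarantee. Your route is therefore genuinely different, a self-contained classical derivation. The two chain-rule expansions of the joint divergence give the exact identity
\[
D_\KL(p\Vert q)-D_\KL(\mathcal{N}(p)\Vert\mathcal{N}(q))=\E_{y\sim\mathcal{N}(p)}\,D_\KL\bigl(\mathcal{B}_{\mathcal{N},p}(\cdot\mid y)\,\Vert\,\mathcal{B}_{\mathcal{N},q}(\cdot\mid y)\bigr),
\]
and joint convexity finishes it. For a continuum of $y$ you can phrase that last step as data processing: $\mathcal{N}(y\mid x)p(x)$ and $\mathcal{N}(p)(y)\,\mathcal{B}_{\mathcal{N},q}(x\mid y)$ share their $y$-marginal, so their divergence is the right-hand side above, and they marginalize onto $x$ to $p$ and $\hat p$.

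Your approach buys two things. First, it shows that the classical inequality with $D_\KL(p\Vert\hat p)$ on the right follows directly from the chain rule and convexity, with no quantum machinery. A literal classical specialization of a fidelity bound would only control a R\'enyi-$\tfrac12$ divergence, which is weaker. Second, the identity pins down the left-hand side exactly. In the paper's application ($q=p(x_A,x_B)p(x_C)$, noise on $A$ only) it equals $I(A;C\mid\tilde{A}B)$, the expected divergence between the posterior of the clean $x_A$ given $(\tilde{x}_A,x_B,x_C)$ and given $(\tilde{x}_A,x_B)$. The citation buys only brevity and a link to the quantum recovery literature that motivates the work.

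One small tidy-up: your separate finiteness assumption on $D_\KL(\mathcal{N}(p)\Vert\mathcal{N}(q))$ is redundant. The $y$-first chain rule writes $D_\KL(p\Vert q)$ as that quantity plus a nonnegative term, so it is finite whenever $D_\KL(p\Vert q)$ is.
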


Now suppose our samples $x$ can be partitioned into three sets, $A$, $B$, $C$ with respective r.v.s $x_A, x_B$, $x_C$ independent of any spatial geometry. Further suppose the noise channel acts only on $A$, i.e., $\smash{\mathcal{N}(p)(y) = \sum_{x_A \in \Omega_A}\mathcal{N}(y \mid x ) p(x)}$ where, in particular, $y_{B} = x_B$ and $y_C = x_C$. We also denote the clean variables as $X$ and the noisy variables as $Y$. Then, taking $p = p(x)$ and $q(x) = p(x_A, x_B) p(x_C)$, by definition of the mutual information $I(X;Y)$ between two r.v.s $X$ and $Y$ as the relative entropy between the joint and the product distribution, we have 
\begin{equation}
    D_\KL(p \Vert q) = I(AB; C)\, , \quad D_{\KL}(\mathcal{N}(p) \Vert \mathcal{N}(q)) = I(\tilde{A}B; C)\, .
\end{equation}
The CMI is defined in terms of MI's, for our application, we write (for $Z$ a generic r.v.) 
\begin{equation}
    I(Z B ; C)  =  I(Z;C\mid B) + I(B ;C)\, . 
\end{equation}
The term $I(B;C)$ cancels between the two MI's, giving 
\begin{equation*}
    D_\KL(p \Vert q) - D_\KL(\mathcal{N}(p)\Vert \mathcal{N}(q))= I(A;C \mid B) - I(\tilde{A} ; C \mid B) \leq I(A;C \mid B) \, , 
\end{equation*}
by positivity of the CMI $I(\tilde{A} ; C \mid B) \geq 0$. Thus, we have obtained 
\begin{equation}
    I(A;C \mid B) \geq D_{\KL}(p \Vert \hat{p})\, .
\end{equation}
Lastly, by Pinsker's inequality~\cite{mackay2003information}, we have $D_{\KL}(p \Vert \hat{p}) \geq 2\TV(p \Vert \hat{p})^2$ where $\TV(p \Vert q) = \int_\Omega \abs{p(x) - \hat{p}(x)}/2$. Chaining the inequalities together, we obtain the following theorem.
\begin{theorem}[CMI bounds local reconstruction error~\cite{hu2025localdiffusionmodelsphases}.]
\label{thm:cmi_tv_theorem}

For a distribution $p$, a noise channel $\mathcal{N}$ acting only on $A$, with an $ABC$ tripartition and with $\hat{p}(x) =  [\mathcal{B}_{\mathcal{N}, p(x_A, x_B) p(x_C)} (\mathcal{N}(p))](x)$ we have 
\begin{equation}\label{eq:hu_local_denoise}
    2\TV(p \Vert \hat{p})^2 \leq I(A ; C \mid B)\, ,
\end{equation}
i.e., CMI of the distribution before the noise is added controls the locality of the denoiser.
\end{theorem}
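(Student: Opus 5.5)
The statement is Theorem~\ref{thm:cmi_tv_theorem}. Its derivation is sketched in the text right before it, so my plan is to make that chain of inequalities explicit, in three steps.

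\textbf{Step 1: apply the Fawzi--Renner proposition.} I will fix $p=p(x_A,x_B,x_C)$ and take the prior $q(x)=p(x_A,x_B)\,p(x_C)$. The noise channel $\mathcal{N}$ acts only on $A$, so $y_B=x_B$ and $y_C=x_C$. With $\hat p=\mathcal{B}_{\mathcal{N},q}(\mathcal{N}(p))$, the classical Fawzi--Renner inequality gives
\[
D_\KL(p\Vert \hat p)\le D_\KL(p\Vert q)-D_\KL(\mathcal{N}(p)\Vert\mathcal{N}(q)).
\]

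\textbf{Step 2: identify both relative entropies as mutual informations.} By definition, $D_\KL(p\Vert q)=I(AB;C)$, since $q$ is the product of the $AB$ and $C$ marginals of $p$. For the noisy term, the key check is that $\mathcal{N}(q)$ is again the product of the marginals of $\mathcal{N}(p)$. The channel acts on $A$ alone and leaves $B$ unchanged, so
\[
\mathcal{N}(q)(y_A,x_B,x_C)=\Big[\textstyle\int \mathcal{N}(y_A\mid x_A)\,p(x_A,x_B)\,\thed x_A\Big]\,p(x_C).
\]
The bracket is exactly the $(\tilde A,B)$ marginal of $\mathcal{N}(p)$, and the $C$ marginal is untouched. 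Hence $D_\KL(\mathcal{N}(p)\Vert\mathcal{N}(q))=I(\tilde A B;C)$.

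\textbf{Step 3: reduce to CMI and apply Pinsker.} The chain rule gives $I(ZB;C)=I(Z;C\mid B)+I(B;C)$ for $Z=A$ and for $Z=\tilde A$. The $I(B;C)$ terms cancel in the difference, which leaves
\[
D_\KL(p\Vert\hat p)\le I(A;C\mid B)-I(\tilde A;C\mid B)\le I(A;C\mid B),
\]
where the last step uses nonnegativity of CMI. Finally, Pinsker's inequality $2\TV(p\Vert\hat p)^2\le D_\KL(p\Vert\hat p)$ yields \eqref{eq:hu_local_denoise}.

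\textbf{Main obstacle.} Most of the argument is bookkeeping; the only point that needs care is Step 2. That step relies on the channel acting only on $A$, so that noising commutes with taking the product $p_{AB}\,p_C$; I will verify this by writing out the densities as above. The Fawzi--Renner inequality itself is assumed from the paper. A more rigorous version would also need the finiteness and measurability caveats that the paper explicitly sets aside.
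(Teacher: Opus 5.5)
Your proposal is correct and follows the paper's argument step for step. It uses Fawzi--Renner with prior $q=p(x_A,x_B)\,p(x_C)$, identifies the two relative entropies as $I(AB;C)$ and $I(\tilde A B;C)$, cancels $I(B;C)$ by the chain rule, drops $I(\tilde A;C\mid B)\ge 0$, and finishes with Pinsker. Your explicit check that $\mathcal{N}(q)$ is the product of the marginals of $\mathcal{N}(p)$ fills in a step the paper only asserts; that check uses the fact that the noise on $A$ depends only on $x_A$.
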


The last thing to clear up is to show that the denoiser in Theorem~\ref{thm:cmi_tv_theorem} is truly local. This follows by definition, where $q = p(x_A, x_B) p(x_C)$ and
\begin{equation}
    \mathcal{B}_{\mathcal{N}, q} (\mathcal{N}(p))(x) = \frac{\mathcal{N}(y_A \mid x_A) p(x_A x_B) p(x_C)}{\int \thed x_A \mathcal{N}(y_A \mid x_A) p(x_A x_B) p(x_C)} \\ = \frac{\mathcal{N}(y_A \mid x_A) p(x_A x_B)}{\int \thed x_A \mathcal{N}(y_A \mid x_A) p(x_A x_B)}\, .
\end{equation}

Suppose we break down a single round of diffusion noise pixel-by-pixel, and only consider a single step of this round. Taking $A$ to be the pixel to whom the noise is added, the CMI bounds the total variance (by Pinsker's inequality and Fawzi-Renner, as applied in~\citealp[Section~III.2 and Supplementary Material~S2.A]{hu2025localdiffusionmodelsphases}
\begin{equation}
\label{eq:TV_CMIbound}
    \TV(p\Vert \hat{p})^2 \leq I(A;C\mid B)\, , 
\end{equation}
where $\TV(p\Vert q)$ is the total variation $ \sum_{x \in \Omega} \abs{p(x) - q(x)}/2$ and $\hat{p}(x) \coloneq (\mathcal{B}_{\mathcal{N}, p(x_A,x_B)} (\mathcal{N}_A (p))(x)$ is the denoised distribution after adding noise to $P$ only on $A$ via the noise channel $\mathcal{N}(y_A \mid x_A)$. Lastly, $\mathcal{B}_{\mathcal{N}, \mathcal{N}_A}$ is the ``Bayes recovery channel'' given by Bayes' rule $\mathcal{B}_{\mathcal{N}, q}(x \mid y) = \mathcal{N}(y \mid x)q(x) / \mathcal{N}(q)(y)$ for any probability distribution $q(x)$. If the CMI is zero, the total variance between the initial distribution $p$ and the noised--locally-denoised $\hat{p}$ is zero---the local denoiser is exact. If instead, the CMI is not small but decays exponentially in the radius of $B$, i.e. if we have 
\begin{equation}
    I(A;C \mid B) \leq \gamma \exp(-r/\xi)\, , 
\end{equation}
where $\xi$ is known as the \emph{Markov length}~\citep[Section~III.1]{hu2025localdiffusionmodelsphases},
we can pick a $r$ small enough to guarantee a total variation error of $\epsilon$, i.e., demanding $\epsilon ^2 \geq \gamma \exp(-r/\xi)$, we get 
\begin{equation}
    r \gtrsim \xi \log \frac{\sqrt\gamma}{\epsilon}\, . 
\end{equation}
Then, stitching together many single-step diffusions, we may derive a similar bound on the full diffusion path~\citep[Section~III.3]{hu2025localdiffusionmodelsphases} where each individual step size must at least be $r \geq \xi \log (N K \sqrt{\gamma}/\epsilon)$, where $K$ is the window size and $N$ is the number of discretization steps. As such, a decaying CMI \emph{guarantees a local denoiser}.

\subsection{Diffusion models}
\label{sec:diffusion}

Diffusion models, as a special type of denoising generative models, propose a parametrisation of a data distribution based on the Langevin SDE~\cite{sohl-dickstein-diffusion,DDPM20,scoreSDE2021}
\begin{equation}
   \mathrm{d} X_t = \mu(X_t, t) \mathrm{d} t + \sigma(t) \mathrm{d} \eta\, , 
\end{equation}
where $\eta$ is a Gaussian random variable, drawn independently at each time step. Under suitable assumptions, there exists a corresponding Fokker-Planck equation (FPE) 
\begin{equation}
   \partial_t P = -\partial_x (\mu(x, t) P) + \partial_x (\sigma^2(t) \partial_x P)/2 \, .
\end{equation}
The FPE admits an exact time-reversed form~\cite{scoreSDE2021}. 
Let $Q(x,t) = P(x, T-t)$ with $T$ the reversal time
\begin{equation}
   \partial_t Q = -\partial_x (\tilde{\mu}(x,T-t) Q) + \partial_x (\sigma^2(T-t) \partial_x Q)/2 \, ,
\end{equation}
with $\tilde{\mu}(x,t) = -\mu(x, t)  +  \sigma^2(t) \partial_x \log Q$. This implies the reversed SDE has a modified drift shifted by $- \sigma^2(t) \partial_x \log Q$, which we need to know to be able to reverse each trajectory. The hard part is the so-called \emph{score} function $\partial_x \log Q$, which requires the knowledge of the full distribution $P(x,t)$. The application to generative modeling takes $P(x,0) = \pdata$, such that $P(x, t \to \infty) = \mathcal{N}(\mu(x),\sigma^2(\infty)).$
Since sampling the late-time Gaussian is easy, the difficulty of sampling $\pdata$ is purely relegated to the reversal process. Interestingly, this reveals that we do not need all of $\pdata$, we only need the gradient of its log, the score function. Therefore, we do not care about multiplicative constants in $\pdata$, which are intractable to compute in many cases. To see this, take an energy based model $p_\theta(x) = \exp(-E_\theta(x))/Z_\theta$~\citep[Section~18.4]{Goodfellow-et-al-2016}. The score is 
\begin{equation}
   s_\theta(x) = -\partial_x \log Z_\theta - \partial_x E_\theta(x) \, , 
\end{equation}
and the first term is zero $\log Z_\theta$ does not depend on $x$, and as such, is zero.

\subsection{Bounding local denoiser error with CMI (diffusion model)}
\label{app:local_scores}

While we have provided an information-theoretic guarantee on when a local score may be constructed, we have not operationalized what it means to have a local score function and how to interpret deviations from locality in terms of optimal denoising.

Define the expectation of a function $f(x)$ over the noisy data distribution be $p_t(x)$. We now will argue, in expectation, that the score function of the marginal distribution $p(x_{AB})$ (obtained by marginalizing over $x_C$) is the smallest over all t possible local approximations. Suppose we approximate the score $\nabla_A \log p_t(x)$  by a function $f(x_{AB})$. The error in the approximation satisfies
\begin{multline}\label{eq:local-approximation-gap}
    \E_{p_t}\norm{\nabla_A \log p_t(x) - f(x_{AB})}^2 = \E_{p_t}\norm{\nabla_A \log p_t(x) - \nabla_A \log p_t(x_{AB})}^2 \\
    + \E_{p_t}\norm{\nabla_A \log p_t(x_{AB}) - f(x_{AB})}^2\, , 
\end{multline}
and as such, the best local approximation is the score of the $AB$ marginal. Any other approximation incurs an extra cost $\norm{\nabla_A \log p_t(x_{AB}) - f(x_{AB})}^2$ in expectation. 

The proof follows directly, by adding and subtracting the local score marginal $\nabla_A \log p_t(x_{AB})$, which yields  
\begin{align*}
    \E_{p_t(x)}&\norm{\nabla_A \log p_t(x) - f(x_{AB})}^2\\
    & =\E_{p_t(x)}\norm{\nabla_A \log p_t(x) - \nabla_A \log p_t(x_{AB})}^2
    + \E_{p_t(x)}\norm{\nabla_A \log p_t(x_{AB}) - f(x_{AB})}^2 \\
     &+ 2\E_{p_t(x)}(\nabla_A \log p_t(x) - \nabla_A \log p_t(x_{AB})) \cdot  (\nabla_A \log p_t(x_{AB}) - f(x_{AB})) \, .
\end{align*}
Noting the expectation is over $p_t(x)$, we have 
\begin{multline}
     \E_{p_t(x)}(\nabla_A \log p_t(x) - \nabla_A \log p_t(x_{AB})) \cdot  (\nabla_A \log p_t(x_{AB}) - f(x_{AB})) \\
     =
     \E_{p_t(x_{AB})} [(\nabla_A \log p_t(x_{AB}) - f(x_{AB}))\cdot \E_{p_t(x_C \mid x_{AB})}(\nabla_A \log p_t(x) - \nabla_A \log p_t(x_{AB}))  ]  \, .
\end{multline}
By definition, $\int \thed x_C p_t(x_C\mid x_{AB}) \nabla_A \log p_t(x) = \nabla_A \log p_t(x_{AB})$, and the cross term is zero. 

Thus, the error in any local approximation is the \emph{locality} gap 
\begin{equation}
    \Delta_\loc(t)^2 = \E_{p_t(x)}\norm{\nabla_A \log p_t(x) - \nabla_A \log p_t(x_{AB})}^2\, ,
\end{equation}
which is a fundamental probe of the locality of the true score function: it is zero, if and only if, the true score is local. While the locality gap is a so-called \emph{instantenous} probe---it probes the score at a single time instance and does not determine directly the output `quality'---it nevertheless describes the downsteam error in the generated image: using Tweedie's identity applied to the $AB$ marginal, one can show 

\begin{equation}
    \E[x_A \mid x(t)] - \E[x_A \mid x_{AB}(t)] = \frac{t^2}{1-t}(\nabla_A \log p_t(x) - \nabla_A \log p_t(x_{AB}))\, , 
\end{equation}
i.e., the vector difference in the locality gap is the change in the optimal prediction $x_A$ due to revealing $C$, and also 
\begin{equation}
    \E[\abs{x_A  - \E[x_A \mid x_{AB}(t)]}^2] - \E[\abs{x_A  - \E[x_A \mid x(t)]}^2] = \frac{t^4}{(1-t)^2} \Delta_\loc^2\, , 
\end{equation}
i.e., the locality gap is a direct probe of the extra denoising error on $A$ incurred by hiding $C$ in the optimal denoiser. In this sense, the norms of the instantenous probes control pixelwise differences/errors in the denoised output.

Furthermore, just like the total variation between the data distribution $P$ and its noised-then-locally-denoised version $\hat{P}$, we can also show that CMI controls the size of the locality gap. In particular, it is possible to show 

\begin{proposition}[Controlling the locality gap by CMI]
\label{prop:locality_gap_cmi}
Assume
\[
X(t)=\alpha(t)X(0)+\sigma(t)Z,
\qquad \sigma(t)>0,
\]
where $Z$ is an independent standard Gaussian and $X(0)$ has
finite second moment. Write $m=\dim X_A$, and define the locality gap by
\[
\Delta_{\mathrm{loc}}(t)
:=
\left(
\mathbb E_{p_t}
\left\|
\nabla_A\log p_t(x(t))
-
\nabla_A\log p_t(x_{AB}(t))
\right\|^2
\right)^{1/2},
\]
where $\|\cdot\|$ is the Euclidean norm on the $A$ coordinates. Then, with natural logarithms,
\[
\Delta_{\mathrm{loc}}(t)^2
\le
\frac{2\sqrt{m(m+3)}}{\sigma(t)^2}
\sqrt{I_t(A;C\mid B)}.
\]
\end{proposition}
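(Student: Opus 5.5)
The plan is to rewrite the locality gap entirely in terms of the standard Gaussian noise $Z_A$ and its posterior means. Then I will show that the gap is a difference of expectations of one test function under the true joint law and under the ``decoupled'' law in which $C$ is conditionally independent of $A$ given $B$. Finally I will control that difference by Pinsker's inequality, exactly as in the proof of Theorem~\ref{thm:cmi_tv_theorem}. The $\sqrt{I_t}$ on the right-hand side and the moment-type constant $\sqrt{m(m+3)}$ both point to a Cauchy--Schwarz/Pinsker step applied to a quadratic function of Gaussian noise.

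\textbf{Step 1 (Tweedie in noise form).} Since $X_A(t)=\alpha(t)X_A(0)+\sigma(t)Z_A$, differentiating under the integral gives
\[
\nabla_A\log p_t(x)=-\tfrac{1}{\sigma}\E[Z_A\mid X(t)=x],\qquad \nabla_A\log p_t(x_{AB})=-\tfrac{1}{\sigma}\E[Z_A\mid X_{AB}(t)=x_{AB}].
\]
The second identity holds because noise is added independently on $A$, so the $AB$ marginal is itself a Gaussian smoothing in the $A$ coordinates. Writing $g(x)$ for the score difference, we get $g=-\sigma^{-1}\big(\E[Z_A\mid x]-\E[Z_A\mid x_{AB}]\big)$. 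This representation is free of $\alpha(t)$ and of $X(0)$, which explains why the bound depends only on $\sigma(t)$.

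\textbf{Step 2 (gap as a difference of expectations).} Set $W:=Z_A-\E[Z_A\mid x_{AB}]$. By the tower property, $\E[g\cdot\E[Z_A\mid x]]=\E[g\cdot Z_A]$, so
\[
\Delta_{\loc}^2=-\tfrac{1}{\sigma}\,\E_p[g\cdot W].
\]
Next introduce the comparison law $q$ on $(x_A,x_B,x_C,Z_A)$ given by $q=p(x_{AB},Z_A)\,p(x_C\mid x_B)$. Under $q$, $Z_A$ is independent of $x_C$ given $x_{AB}$, so $\E_q[W\mid x]=0$ and $\E_q[g\cdot W]=0$. Therefore
\[
\Delta_{\loc}^2=-\tfrac{1}{\sigma}\big(\E_p-\E_q\big)[g\cdot W].
\]
A Cauchy--Schwarz bound on a signed measure then gives
\[
\big|(\E_p-\E_q)f\big|\le\sqrt{2\big(\E_p f^2+\E_q f^2\big)}\,\sqrt{\TV(p,q)},
\]
and Pinsker bounds $\TV(p,q)$ by $\sqrt{D_\KL(p\Vert q)/2}$.

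\textbf{Step 3 (moments).} Bound $\E f^2$ for $f=g\cdot W$ in terms of $\sigma^{-2}$ times fourth moments of $Z_A$. Jensen applied to conditional expectations moves everything onto $\E\norm{Z_A}^4=m(m+2)$ together with $\E\norm{Z_A}^2=m$. The cross terms should combine into the $m(m+3)$ appearing in the statement. Combining Steps 2 and 3 yields $\Delta_{\loc}^2\le \frac{c\sqrt{m(m+3)}}{\sigma^2}\,D_\KL(p\Vert q)^{1/4}$-type bounds. To land exactly on $\sqrt{I}$ I will instead use the sharper weighted inequality
\[
\big|(\E_p-\E_q)f\big|\le \sqrt{\E_{p+q}f^2}\,\sqrt{\textstyle\int (dp-dq)^2/(dp+dq)},
\]
together with the bound of the triangular discrimination by $2D_\KL$.

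\textbf{Main obstacle.} The hard point is the identification of $D_\KL(p\Vert q)$. For the $q$ above, the chain rule gives $D_\KL(p\Vert q)=I(X_AZ_A;X_C\mid X_B)=I_t(A;C\mid B)+I(Z_A;X_C\mid X_{AB})$, and the second term is not obviously controlled by the first. My first attempt would be a different comparison law: $\tilde q=p(x_{AB})\,p(x_C\mid x_B)\,p(Z_A\mid x)$, which keeps the true noise posterior. With this choice $D_\KL(p\Vert\tilde q)=I_t(A;C\mid B)$ exactly. The price is that $\E_{\tilde q}[g\cdot W]$ no longer vanishes. However, it equals $\E_{\tilde q}\big[g\cdot(\E[Z_A\mid x]-\E[Z_A\mid x_{AB}])\big]=-\sigma\,\E_{\tilde q}\norm{g}^2\le 0$, so it has a definite sign. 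If that sign turns out to be favourable, it can simply be dropped, and Steps 2--3 then go through with $I_t(A;C\mid B)$ as the relevant divergence. If the sign is unfavourable, I would fall back on the identity $g=\nabla_A\log p_t(x_C\mid x_{AB})$ and a Gaussian integration by parts in $x_A$ to absorb the $\tilde q$ term.
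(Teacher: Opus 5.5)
\textbf{There is a genuine gap: neither of your comparison laws makes the argument close.} Your Steps 1 and 2 are correct. This includes noise-form Tweedie, the identity $\Delta_{\loc}^2=-\sigma^{-1}\E_p[g\cdot W]$, and the weighted Cauchy--Schwarz bound with triangular discrimination at most $2D_{\KL}$. The problem is the choice of comparison law, which is the entire content of the proposition.

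\textbf{First law.} For $q=p(x_{AB},Z_A)\,p(x_C\mid x_B)$ you get $D_{\KL}(p\Vert q)=I_t(A;C\mid B)+I(Z_A;X_C(t)\mid X_{AB}(t))$. Given $X_A(t)$ and $\alpha(t)\neq0$, $Z_A$ and $X_A(0)$ determine each other, so the extra term equals $I(X_A(0);X_C(t)\mid X_{AB}(t))$. This term is not controlled by the CMI. Take $B$ independent of everything, a fair bit $b$ with $X_C(0)=Mb$ and $\alpha(t)M\gg\sigma(t)$, and $X_A(0)=\epsilon b$ with $\alpha(t)\epsilon\ll\sigma(t)$. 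Then $I_t(A;C\mid B)=O(\alpha(t)^2\epsilon^2/\sigma(t)^2)$, while $I(X_A(0);X_C(t)\mid X_A(t))\approx\log 2$. So this route can only give a bound that does not vanish with the CMI.

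\textbf{Step 3 has a separate problem.} The Jensen/tower reductions to $\E\norm{Z_A}^4=m(m+2)$ hold only when $x$ is distributed according to $p$, because $\E[Z_A\mid x]$ is a $p$-posterior. The triangular-discrimination bound also needs $\E_q f^2$. Under a decoupled law, $g(x)$ is evaluated at pairs $(x_{AB},x_C)$ that are atypical for $p$. For instance, if $X_A(0)=X_C(0)=\pm M$, then with constant probability under the decoupled law $\norm{\E[Z_A\mid x]}$ is of order $\alpha(t)M/\sigma(t)$. So no constant depending on $m$ alone can come out, and nothing in your argument explains how cross terms would combine into $m(m+3)$.

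\textbf{Second law.} Your fallback $\tilde q=p(x_{AB})\,p(x_C\mid x_B)\,p(Z_A\mid x)$ does satisfy $D_{\KL}(p\Vert\tilde q)=I_t(A;C\mid B)$. But it keeps the true posterior of $Z_A$ given $x$, so $g\cdot W$ has conditional mean $-\sigma\norm{g(x)}^2$ under both laws. Your decomposition then collapses to the tautology $\Delta_{\loc}^2=(\E_p-\E_{\tilde q})\norm{g}^2+\E_{\tilde q}\norm{g}^2$. The sign is unfavourable: the leftover $\E_{\tilde q}\norm{g}^2\ge0$ must be added, not dropped. It is the same integrand as $\Delta_{\loc}^2$, taken under the decoupled law, so bounding it is as hard as the original problem.

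The integration-by-parts repair runs into the obstruction from Step 3. Stein's identity in $x_A$ gives $\Delta_{\loc}^2=-(\E_p-\E_{\tilde q})[\nabla_A\cdot g+g\cdot\nabla_A\log p_t(x_{AB})]$, with divergence exactly $I_t$. But it then needs second moments of $\nabla_A^2\log p_t$ at $\tilde q$-typical points, and these are not controlled by $m$.

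\textbf{What the paper does instead.} The missing idea is a second-order argument in which every moment is taken under the true smoothed law. The paper adds fictitious Gaussian noise of variance $u$ to $A$ alone and tracks the CMI $I(u)$ along this flow.
\begin{itemize}
\item De Bruijn's identity for $h(A\mid B)$ and $h(A\mid BC)$, with the projection property of conditional expectation, gives $I'(0)=-\tfrac12\Delta_{\loc}^2$.
\item Fisher-information dissipation and the Hatsell--Nolte formula $\nabla_A^2\log p_t=(\operatorname{Cov}(Z_A\mid X(t))-I_m)/\sigma^2$ give $I''(u)\le\tfrac12\E\norm{\nabla_A^2\log p}_{\mathrm F}^2\le m(m+3)/(2\sigma^4)$. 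Here $m(m+3)=m+\E\norm{Z_A}^4$.
\item Nonnegativity of $I(u)$ gives $0\le I_t-\tfrac u2\Delta_{\loc}^2+\tfrac{m(m+3)}{4\sigma^4}u^2$, and optimizing over $u$ yields exactly the stated constant.
\end{itemize}
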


\begin{proof}
Fix $t$. Tweedie's formula and the Hatsell--Nolte identity
\cite[Eq.~(3) and Proposition~1]{dytso2021derivative}
give
\begin{equation}
\label{eq:locality_gaussian_hessian}
\nabla_A^2\log p_t(x(t))
=
-\frac{I_m}{\sigma(t)^2}
+
\frac{
\operatorname{Cov}\!\left(\alpha(t)X_A(0)\mid X(t)\right)
}{\sigma(t)^4}
=
\frac{
\operatorname{Cov}(Z_A\mid X(t))-I_m
}{\sigma(t)^2}.
\end{equation}
The second equality uses
$\alpha(t)X_A(0)=X_A(t)-\sigma(t)Z_A$ at fixed $X(t)$.

We use two standard moment identities. Here $\|\cdot\|_{\mathrm F}$ denotes the Frobenius norm,
whose square is the sum of squared matrix entrie.
First, the \emph{$L^2$-contraction of conditional expectation}
states that, for any square-integrable scalar random variable
$V$ and side information $S$,
\[
\mathbb E\!\left[
    \left(\mathbb E[V\mid S]\right)^2
\right]
\le
\mathbb E[V^2].
\]
This follows from conditional Jensen's inequality and the
tower property
\cite[Theorem~4.1.11]{durrett2019probability}.

By Wick's theorem for Gaussian moments we have,
\[
\mathbb E\|Z_A\|^4
=
\sum_{i,j=1}^{m}
\mathbb E[Z_{A,i}^2Z_{A,j}^2]
=
\sum_{i,j=1}^{m}(1+2\delta_{ij})
=
m(m+2).
\]

Since a covariance matrix is positive semidefinite,
\[
\|\operatorname{Cov}(Z_A\mid X(t))\|_{\mathrm F}
\le
\operatorname{tr}\operatorname{Cov}(Z_A\mid X(t))
\le
\mathbb E[\|Z_A\|^2\mid X(t)].
\]
Thus, expanding \eqref{eq:locality_gaussian_hessian},
dropping the nonpositive trace term, and applying the two
moment identities above gives
\begin{equation}
\label{eq:locality_hessian_bound}
\begin{aligned}
\mathbb E_{p_t}
\|\nabla_A^2\log p_t(x(t))\|_{\mathrm F}^2
&=
\frac{
m
-2\mathbb E\operatorname{tr}\operatorname{Cov}(Z_A\mid X(t))
+\mathbb E\|\operatorname{Cov}(Z_A\mid X(t))\|_{\mathrm F}^2
}{\sigma(t)^4}
\\
&\le
\frac{
m+
\mathbb E\!\left[
\left(\mathbb E[\|Z_A\|^2\mid X(t)]\right)^2
\right]
}{\sigma(t)^4}
\\
&\le
\frac{m+\mathbb E\|Z_A\|^4}{\sigma(t)^4}
=
\frac{m(m+3)}{\sigma(t)^4}.
\end{aligned}
\end{equation}

Now add a fictitious Gaussian noise of variance $u\ge0$ to $A$ alone.
The resulting $A$ variable has the same distribution as
\[
\alpha(t)X_A(0)+\sqrt{\sigma(t)^2+u}\,Z_A,
\]
while $X_B(t)$ and $X_C(t)$ remain unchanged.
The Gaussian noise $Z_A$ is still independent of
$(X(0),X_B(t),X_C(t))$. Therefore
\eqref{eq:locality_hessian_bound} remains valid with
$\sigma(t)^2$ replaced by $\sigma(t)^2+u$.

The \emph{de Bruijn identity} states that, for a
Gaussian-smoothed density $\rho_u(y)$ evolving according to
\[
\partial_u\rho_u
=
\frac12\sum_{i=1}^{m}\partial_{y_i}^2\rho_u,
\]
the entropy satisfies
\[
\frac{d}{du}
\left[-\int_{\mathbb R^m}\rho_u(y)\log\rho_u(y)\,dy\right]
=
\frac12
\int_{\mathbb R^m}
\rho_u(y)\|\nabla_y\log\rho_u(y)\|^2\,dy
\]
\cite[Lemma~1]{wibisono2018convexity}.
It also holds for conditional entropy when the conditioning
variables are unchanged by the added noise: apply the identity
to each conditional density and average.

Apply this identity to
$I(A;C\mid B)=H(A\mid B)-h(A\mid B,C)$ where $H(A)$ is the differential entropy.
The marginal score is the conditional expectation of the
full score,
\[
\mathbb E\!\left[
\nabla_A\log p_t(X(t))
\,\middle|\,
X_{AB}(t)
\right]
=
\nabla_A\log p_t(x_{AB}(t)).
\]
The orthogonal-projection property of conditional expectation
\cite[Theorem~4.1.15]{durrett2019probability}
therefore gives
\begin{equation}
\label{eq:locality_cmi_first_derivative}
\begin{aligned}
&\left.
\frac{d}{du}
I\!\left(
\alpha(t)X_A(0)+\sqrt{\sigma(t)^2+u}\,Z_A
:X_C(t)\mid X_B(t)
\right)
\right|_{u=0}
\\
&\qquad=
\frac12\mathbb E_{p_t}\!\left[
\|\nabla_A\log p_t(x_{AB}(t))\|^2
-
\|\nabla_A\log p_t(x(t))\|^2
\right]
\\
&\qquad=
-\frac12\Delta_{\mathrm{loc}}(t)^2.
\end{aligned}
\end{equation}

The \emph{Fisher-information dissipation identity} states,
for the same Gaussian heat flow, that
\[
\frac{d}{du}
\int_{\mathbb R^m}
\rho_u(y)\|\nabla_y\log\rho_u(y)\|^2\,dy
=
-\int_{\mathbb R^m}
\rho_u(y)\|\nabla_y^2\log\rho_u(y)\|_{\mathrm F}^2\,dy
\]
\cite[Lemma~1]{wibisono2018convexity}.
Together with de Bruijn's identity, this says that the second
entropy derivative is minus one half of the mean squared
log-density Hessian.

Apply this to the two conditional entropies in the CMI,
we obtain
\begin{equation}
\label{eq:locality_cmi_second_derivative}
\begin{aligned}
&\frac{d^2}{du^2}
I\!\left(
\alpha(t)X_A(0)+\sqrt{\sigma(t)^2+u}\,Z_A
:X_C(t)\mid X_B(t)
\right)
\\
&\quad=
\frac12\mathbb E
\left\|
\nabla_A^2\log p\!\left(
\alpha(t)X_A(0)+\sqrt{\sigma(t)^2+u}\,Z_A
\,\middle|\,
X_B(t),X_C(t)
\right)
\right\|_{\mathrm F}^2
\\
&\qquad-
\underbrace{
\frac12\mathbb E
\left\|
\nabla_A^2\log p\!\left(
\alpha(t)X_A(0)+\sqrt{\sigma(t)^2+u}\,Z_A
\,\middle|\,
X_B(t)
\right)
\right\|_{\mathrm F}^2
}_{\displaystyle \ge0}
\\
&\quad\leq
\frac12\mathbb E
\left\|
\nabla_A^2\log p\!\left(
\alpha(t)X_A(0)+\sqrt{\sigma(t)^2+u}\,Z_A
\,\middle|\,
X_B(t),X_C(t)
\right)
\right\|_{\mathrm F}^2
\\
&\quad=
\frac12\mathbb E
\left\|
\nabla_A^2\log p\!\left(
\alpha(t)X_A(0)+\sqrt{\sigma(t)^2+u}\,Z_A
\, ,
X_B(t),X_C(t)
\right)
\right\|_{\mathrm F}^2
\\
&\quad\le
\frac{m(m+3)}{2(\sigma(t)^2+u)^2}
\le
\frac{m(m+3)}{2\sigma(t)^4},
\end{aligned}
\end{equation}
where we used \eqref{eq:locality_hessian_bound} for the second last inequality.
Integrating
\eqref{eq:locality_cmi_second_derivative} twice gives
Taylor's quadratic upper bound
\cite{boyd2004convex}: if \(f''(u)\le L\) for \(u\ge0\), then
\[
f(u)\le f(0)+u f'(0)+\frac L2u^2.
\]
Using \eqref{eq:locality_cmi_first_derivative} and
nonnegativity of CMI, we obtain
\[
\begin{aligned}
0
&\le
I\!\left(
\alpha(t)X_A(0)+\sqrt{\sigma(t)^2+u}\,Z_A
:X_C(t)\mid X_B(t)
\right)
\\
&\le
I_t(A;C\mid B)
-\frac{u}{2}\Delta_{\mathrm{loc}}(t)^2
+\frac{m(m+3)}{4\sigma(t)^4}u^2.
\end{aligned}
\]
Choose the nonnegative minimizer of this quadratic,
\[
u=
\frac{\sigma(t)^4\Delta_{\mathrm{loc}}(t)^2}{m(m+3)}.
\]
Substitution gives
\[
\frac{\sigma(t)^4\Delta_{\mathrm{loc}}(t)^4}{4m(m+3)}
\le
I_t(A;C\mid B).
\]
Rearranging and taking a square root proves the proposition.
\end{proof}

\subsection{Conditional scores and Classifier-Free Guidance}
\label{sec:CFG}

An important aspect of diffusion models is that they produce images that are faithful to classes of semantic information; a good model trained on cat and dog images produces one animal at a time, it does not produce an amalgamation of the two. How does the model steer towards a specific semantic class? To model this behaviour, we assume the data distribution is a joint distribution $p(x,s)$ between images $x$ and labels $s$ where the diffusion noise only acts on the conditional image distribution, i.e., $p_t(x,s) = p(s) p_t(x\mid s)$. Since the model only has access to the image marginal score, we can write the score as using Bayes' rule
\begin{equation}
    \nabla_A \log p_t(x) = \nabla_A \log p_t(x\mid s) - \nabla_A \log p_t(s \mid x)\, . \label{eq:bayes-rule}
\end{equation}
Rearranging this equation, we obtain the conditioning gap 
\begin{equation}
    \Delta_\mathrm{cond}(t)^2 \coloneq \E\norm{\nabla_A \log p_t(s \mid x)}^2 = \E\norm{\nabla_A \log p_t(x) - \nabla_A \log p_t(x \mid s)}^2\, , 
\end{equation}
whose norm answers two questions: $(i)$ how sensitive is a global classifier to a change in $A$ or $(ii)$ how much does conditioning change the global score function? Just like the locality gap, it also is the fundamental score error in approximating the conditional score by any unconditional function of the global image. Furthermore, by using Tweedie on the label-conditional distribution on the full image $p_t(x \mid s)$, we can show that the conditioning gap controls both the pixelwise difference in the denoised image, as well as the extra squared error in the downstream sample due to hiding/revealing the semantic label $s$. Operationally, the score difference $\nabla_A \log p_t(s \mid x)$ is precisely the term added in diffusion models by classifier guidance~\cite{dhariwal2021diffusionmodelsbeatgans} and the application of Bayes' rule to expand it in terms of the conditional and unconditional scores is the basis of classifier-gree guidance~\cite{ho2022classifierfreediffusionguidance} which drops the need to train an explicit classifier to obtain the gradient $\nabla_A \log p_t(s \mid x)$. Thus, the conditioning gap probes how strongly guidance can change the score on $A$ at each time: a small gap means that even revealing the label provides little additional direction for denoising.

\section{Semantic Speciation for Local Region}\label{app:local-fb}
\begin{figure}[ht    ]
\begin{center}
\includegraphics[width=0.5\textwidth]{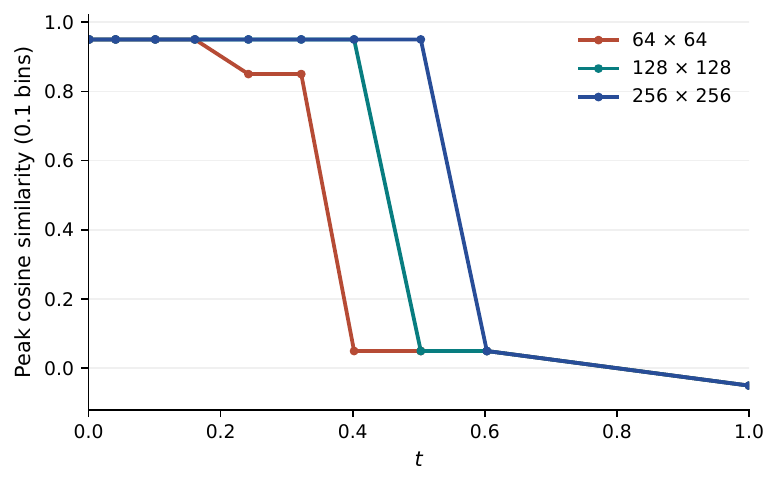}
\end{center}
\caption{Label--reconstruction similarity remains high at early diffusion times
and then falls toward zero. The sharp drop occurs first for the $64\times64$ crop,
then for the $128\times128$ crop, and last for the full image.}
\label{fig:local-fb}
\end{figure}
We adapt the forward--backward ImageNet protocol of
Sclocchi et al.~\cite{hierarchical_diffusion}, using the same unconditional
256-pixel diffusion checkpoint and 250-step respaced reverse sampler. 
Here $t\in[0,1]$ denotes normalized diffusion time. We select one validation
image from each of 100 classes in the ImageNet-1K (ILSVRC2012) validation set. A separate classifier chooses a class-bearing
$64\times64$ crop from 25 candidate locations; a $128\times128$ crop uses the
same center, and the global observation is the full image. Local crops are
enlarged to 256 pixels before noising, so the reverse process receives no
pixels outside the selected region. We draw one reverse sample per image and
time.

Following Sclocchi et al., we measure cosine similarity between classifier
logits of each source and reconstruction, see Figure~\ref{fig:local-fb}. We standardize ConvNeXt V2 Large~\citep{woo2023convnextv2}
logits using 1,000 clean reference images and plot the most populated bin of
the 100 pairwise similarities. The figure uses fixed 0.10-wide bins throughout. 

This binned cosine peak is distinct from
$P_{\mathrm{FB}}(R,t)$; enlarged crops also differ from the full images used
to train the denoiser.

\section{Proof of Lemma~\ref{lem:sandwich}}
\label{app:sandwich-proof}

\begin{proof}
The mutual information is the average divergence between the posterior and the prior:
\begin{equation*}
I_t(S;R)=\mathbb E\sum_s q_s\log\frac{q_s}{p(s)}.
\end{equation*}
For each observation, Pinsker's inequality bounds this divergence below by
$\frac12\mathbb E(\sum_s|q_s-p(s)|)^2$, which is at least
$\frac12\mathbb E\sum_s(q_s-p(s))^2$. The inequality $\log u\leq u-1$
bounds it above by $\sum_s(q_s-p(s))^2/p(s)$.
Averaging proves~\eqref{eq:sandwich-0}.

For~\eqref{eq:sandwich-1}, Jensen's inequality gives
$-\sum_s q_s\log q_s\geq-\log\sum_s q_s^2$.
Average this inequality and use Jensen's inequality together with
$P_{\mathrm{FB}}(R,t)=\mathbb E\sum_s q_s^2$ to get
$H(S\mid X_{R,t})\geq-\log P_{\mathrm{FB}}(R,t)$.
This is the first bound because $I_t(S;R)-H(S)=-H(S\mid X_{R,t})$.

For the other bound, define
$$ E= \begin{cases} 0,&S=\widehat S,\\ 1,&S\ne\widehat S, \end{cases} \qquad e:=\Pr(E=1)=1-P_{\mathrm{FB}}(R,t). $$
The original and returned labels are independent given $X_{R,t}$, so knowing the returned label does not reduce $H(S\mid X_{R,t})$. The entropy chain rule therefore gives
\begin{align}\label{eq:sandwich-chain-rule} H(S\mid X_{R,t}) &=H(S\mid X_{R,t},\widehat S)\notag\\ &=H(S,E\mid X_{R,t},\widehat S)\notag\\ &= \underbrace{H(E\mid X_{R,t},\widehat S)}_{\text{Was the guess wrong?}} + \underbrace{H(S\mid E,X_{R,t},\widehat S)}_{\text{If wrong, which label?}} \end{align}

The uncertainty in this yes-or-no answer, without any additional information, is $ H(E)=h_2(e).$ Knowing \(X_{R,t}\) and \(\widehat S\) can only reduce that uncertainty. Thus
$$ H(E\mid X_{R,t},\widehat S)\le h_2(e). $$
Now suppose we have been told the value of \(E\). When \(E=0\), the original label is exactly \(\widehat S\). There is no remaining uncertainty:
$$ H(S\mid E=0,X_{R,t},\widehat S)=0. $$
When \(E=1\), the original label cannot equal \(\widehat S\), so there are at most \(L-1\) possibilities. A distribution over \(L-1\) possibilities has entropy at most \(\log(L-1)\):
$$ H(S\mid E=1,X_{R,t},\widehat S)\le\log(L-1). $$
The second situation occurs with probability \(e\). Averaging the two cases gives
$$ H(S\mid E,X_{R,t},\widehat S) \le (1-e)\cdot0+e\log(L-1). $$

Substitute into \eqref{eq:sandwich-chain-rule} we obtain the second bound in~\eqref{eq:sandwich-1}.
\end{proof}

\section{Empirical probe of the common-cause hypothesis}
\label{app:common-cause-experiment}
We probe the common-cause interpretation in Stable Diffusion~3
Medium (Figure~\ref{fig:common-cause}).
Each scene has three nested descriptions: longer versions retain
the shorter description and append semantic details.
We generate trajectories at $1024\times1024$ resolution using
30 FlowMatch Euler steps (scheduler shift 3), global attention
throughout, and classifier-free guidance of scale 4.
At every pre-update latent state, we evaluate the same pretrained
weights with global attention and with local attention implemented
using FlexAttention. The local variant restricts image--image
attention to a clipped $15\times15$ token neighborhood
(Chebyshev radius 7) in all 24 transformer blocks, while retaining
all text connections.
For each description-length group, we measure
\begin{equation}
\Delta_b(t)=
\mathbb{E}_{i,r}\!\left[
\frac{1}{d}
\left\|
v_{\theta}^{\mathrm{loc}}(x_t^{i,r},t,c_{i,b})
-
v_{\theta}^{\mathrm{glob}}(x_t^{i,r},t,c_{i,b})
\right\|_2^2
\right],
\qquad b\in\{\mathrm{C},\mathrm{U}\},
\end{equation}
where $d$ is the number of latent coordinates,
$c_{i,\mathrm{C}}$ is the scene description,
$c_{i,\mathrm{U}}=\varnothing$ is the empty prompt,
and the empirical expectation averages scenes $i$ and seeds $r$.
Both branches are evaluated on the same globally guided trajectory
generated for the corresponding description; the predictions
$v_\theta$ are measured before applying guidance.
Because SD3 predicts flow velocity, $\Delta_b$ is a score-gap proxy.

In Figure~\ref{fig:common-cause}, lines average over seeds and then scenes;
shading denotes pointwise 95\% confidence intervals from 20,000
paired scene-cluster bootstrap resamples, stratified by subject category.
The time-weighted reduction
$1-\sum_k w_k \Delta_{\mathrm{cond}}(t_k)/
     \sum_k w_k \Delta_{\mathrm{uncond}}(t_k)$,
with $w_k=t_k-t_{k+1}$, is 20.7\%, 23.4\%, and 24.4\%
for short, medium, and long descriptions.
Diffusion time increases from clean ($t=0$) to noise ($t=1$);
generation proceeds right to left. 

These results provide noise-dependent evidence consistent with
semantic common causes. Attention truncation remains an operational
probe rather than an exact marginal-score construction, so the
experiment does not directly establish the mutual-information
inequality in Hypothesis~\ref{hyp:common-cause}.

\section{Gaussian Mixture Calculations}
\label{app:gmm_calculations}

In this Appendix we provide details of the calculations of the score gaps and the CMI for Gaussian mixtures of various kinds. 

\subsection{Scores for Gaussian Mixtures}

In this subsection we compute various scores and score gaps for Gaussian mixtures, for which almost all results are analytic. 

We consider $N \times N$ images as a random variable $X_0$, whose instances are flattened into vectors $\vec{x}_0 \in \Reals^{N^2}$ and use an interpolation $X_t = (1-t)X_0 + t Z$ where $Z\sim \normal(0, \ident_{N^2})$. Take the joint distribution of labels $S = \pm 1$ and images $X$ to be an equal-weight ferromagnetic Gaussian mixture where $p(S = s) = 1/2$, $p(x_0 \mid S=s) \sim \normal( s \mu \vec{1}, \sigma^2\ident_{N^2})$, where $\vec{1} \in \Reals^{N^2}$ is the vector of ones, $\mu \in \Reals$ is a parameter setting the $O(1)$ separation scale of the means, and $\sigma^2$ is the intrinsic variance of the distribution. The observed distribution is the image marginal 
\begin{equation}
    p(\vec{x}_0) = \frac{1}{Z_{\mu, \sigma^2}}\sum_{s=\pm 1}\left[\exp(-\frac{1}{2 \sigma^2} (\vec{x}- s \mu \vec{1})^2)\right]\, . 
\end{equation}
Since sum of Gaussian random variables remains a Gaussian, the noised distribution itself is a two-component mixture with time-dependent parameters $\mu_t = \mu (1-t)$ and $\sigma^2_t = (1-t)^2 \sigma^2  + t^2$. We thus perform calculations suppressing the time dependence and restore as needed.

We first rewrite the $s$-conditional Gaussian of the marginal image distribution on some region $R$ as 
\begin{equation}
    p(x_R \mid s) = \frac{\alpha(x_R)}{Z_{\mu, \sigma^2}}\exp(s B(x_R))\, , 
\end{equation}
where $B(x_R) = (\mu/\sigma^2)\sum_{i \in R} x_i$ is a soft majority vote of all pixels in region $R$, named $B$ as it is effectively a `magnetic field' bias on region $R$ given label $s$. $s$-independent terms are grouped into the prefactor $\alpha(x_R)$ which will cancel out of relevant scores. Since the label distribution is uniform, this is all we need: the reverse conditional distributions are given by the Bayes' formula corollary 
\begin{equation}
    p(s \mid x_R)= \frac{p(x_R\mid s)}{\sum_{s'=\pm 1} p(x_R \mid s')} = \frac{\exp(s B(x_R))}{2\cosh B(x_R)}\, .
\end{equation}
As a result, Bayes optimal inference of the global label is $\E[S \mid X_R] \equiv \sum_{s\in \pm 1} s p(s \mid x_R) = \tanh B(x_R)$. Thus the task of inferring which Gaussian the observation $x_R$ comes from is equivalent to studying the magnetisation of a non-interacting Ising magnet.

From the two conditional distributions, we can obtain the scores directly. For example,
\begin{equation}
    s_A(x_R\mid s) \coloneq \nabla_{x_A} \log p(x_R \mid s) = -\frac{1}{\sigma^2}(x_A - s \mu)\, ,
\end{equation}
which implies
\begin{equation}
    s_A(x_R) \coloneq \nabla_{x_A} \log p(x_R) = -\frac{x_A}{\sigma^2} + \frac{\mu}{\sigma^2}\tanh B(x_R)\, ,
\end{equation}
via the trick $\nabla_{x_A} \log p(x_R) = \sum_{s}p(s\mid x_R) \nabla_A \log p(x_R \mid s)$. In the score $s_A$, the first term only includes $A$, and the rest of the image enters through the $\tanh$. Then we have the locality gap 
\begin{equation}
    \Delta^U_\loc = \frac{\mu}{\sigma^2}[\tanh B(x) - \tanh B(x_{AB})]\, .
\end{equation}
This gives an explicit characterisation of what the locality gap is comparing: it asks how important $\sum_{i\in C} x_i$ is in inferring the label, softened through the $\tanh$. Its peak in $t$ is directly given by the shift induced by $C$.

The same results also let us calculate the global conditioning gap 
\begin{equation}
    \Delta^G_\cond = \frac{\mu}{\sigma^2}(\tanh(B(x))-s)\, .
\end{equation}
If we look at their difference, 
\begin{equation}
    \Delta^U_\loc - \Delta^G_\cond = \frac{\mu}{\sigma^2}(s - \tanh(B(x_{AB}))\, .
\end{equation}

\subsection{CMI for Gaussian mixtures}

Just like the scores, the conditional mutual information for the two component Gaussian mixture is analytically reducible to a single integral.

\paragraph{Exact formula for CMI.}To start, we recall the definition of the CMI 
\begin{align}
    I(A;C\mid B) 
    &\coloneq \E_{x_B} D_\KL(p(x_A,x_C \mid x_B) \Vert p(x_A \mid x_B) p(x_C\mid x_B) ) \, ,  \\ 
    &\equiv \int \thed x p(x) \log \frac{p(x_A, x_C \mid x_B)}{p(x_A \mid x_B) p(x_C\mid x_B)}\, , 
\end{align}
and get rid of conditioning terms by restoring $B$ marginals. The result is 
\begin{align}
    \label{eq:cmi_entropy_mi_expand}
    I(A;C\mid B) 
    &\equiv \int \thed x p(x) \log \frac{p(x_A, x_C, x_B)p(x_B)}{p(x_A,x_B) p(x_C, x_B)} \\ 
    &= H(AB) -H(B) + H(BC)-H(ABC)\,, \\
    & = I(A; C | B, S) + [I(S; AB) - I(S; B)]- [I(S; ABC) - I(S; BC) ]\, .
\end{align}
For the joint Gaussian mixture, $I(A; C | B, S)=0$ since conditional on the label, the distribution completely factorizes, and we only need to compute mutual informations/differential entropies associated to a region $R$ where $R \in \{AB, BC, B, ABC\}$. 

The entropy $H(R)$ is
\begin{multline}
    H(R) = -\int \thed x_R p(x_R) \log p(x_R) = \frac{1}{2 \sigma^2}\int \thed x_R p(x_R)  (x_R- \mu_R)^2 \\- \int \thed x_R p(x_R) \log (1+ \exp(- \frac{2}{\sigma^2} x_R \cdot \mu_R)\,) , 
\end{multline}
obtained by factoring out one of the Gaussians inside the $\log$. The first term gives two trivial gaussian integrals by expanding $p(x_R)$ as a sum of two Gaussian integrals (which we do not write out, they will cancel out between all the terms in the CMI~\eqref{eq:cmi_entropy_mi_expand}). The second term is the nontrivial one because it includes a sum inside the $\log$, as well as integrals over all the pixels in $R$. However, since the integrand only depends on the dot product $x_R \cdot \mu_R$, we may write $x_R = x_R^\parallel + x_R^\perp$ where $x_R^\perp \cdot \mu_R = 0$ and $\thed x_R = \thed x_R^\parallel \thed x_R^\perp$, and $\thed x_R^\perp$ integrates out to give a constant that cancels with some of the normalisation in $p(x_R)$. The result is a one-dimensional integral 
\begin{equation}
    H(R) = W(R) - \frac{1}{Z_R^\parallel}\int \thed x_R^\parallel \exp\left(-\frac{(x_R^\parallel-\norm{\mu_R})^2}{2\sigma^2}\right) \log \left (1+ \exp(-\frac{2}{\sigma^2} \norm{\mu_R} x_R^\parallel)\right)\, .
\end{equation}
Lastly, we perform a change of variable by writing the parallel component as `mean + fluctuations', i.e. $x_R^\parallel = \mu_R + \sigma z$, which gives  
\begin{equation}
    H(R) = W(R) - \frac{1}{\sqrt{2\pi}}\int \thed z \exp\left(-z^2/2\right) \log \left (1+ \exp(-2 K^2- 2K z)\right)\, ,
\end{equation} 
in terms of one free parameter, the signal-to-noise ratio $K_R \coloneq \norm{\mu_R}/\sigma$. Defining 
\begin{equation}\label{eq:F_integral}
    F(K) \coloneq \frac{1}{\sqrt{2\pi}}\int \thed z \exp\left(-z^2/2\right) \log \left (1+ \exp(-2 K^2- 2K z)\right)\, , 
\end{equation}
we can write the CMI as 
\begin{align}
    I(A;C\mid B) 
    = F(K_B) - F(K_{AB}) - [ F(K_{BC}) - F(K_{ABC})]\, .
\end{align}

\paragraph{Asymptotic expansion for the $F$ integral.}The integral $F(K)$ can be exactly evaluated in the large signal limit $K \gg 1$. Performing another change of variables $ y = 2 K(K+ z)$ yields 
\begin{equation}
    F(K) = \frac{1}{2K}\exp(-K^2/2)\int \frac{\thed y}{\sqrt{2\pi}} \exp\left(-y^2/8K^2\right)\exp(y/2) \log \left (1+ \exp(-y)\right)\, , 
\end{equation}
and as $K \to \infty$, $\exp\left(-y^2/8K^2\right) \to 1$, and 
\begin{equation}
    F(K) \asymp \sqrt\frac{\pi}{2K^2}\exp(-K^2/2)\, , 
\end{equation}
where $\int \thed y \exp(y/2) \log \left (1+ \exp(-y)\right) = 2\pi$ by elementary methods. 

\paragraph{Diverging Markov Length.}Now send the image size to infinity, growing both $|B|$ and $|C|$ as $O(N^2)$ while $|A| = O(1)$, a ``thermodynamic limit'' relevant for denoising a small patch. This yields, in the $K \to \infty$ limit, a simple form for the CMI which reads 
\begin{equation}
    I(A;C \mid B) \asymp \sqrt{\frac{\pi}{2 K_B^2}} \exp\left[-\frac{\mu^2}{2 \sigma^2} \abs{B}\right]\left(1- \exp\left[-\frac{\mu^2}{2 \sigma^2} \abs{A}\right]\right)\, , 
\end{equation}
up to exponentially small corrections in $|C|$. Since this decays faster than exponential in $r \propto \sqrt{B}$, the Markov length $\xi = 0$, and there is no locality transition whenever $K$ is large. 

If we recall the noise path $X_t = (1-t)X_0 + t Z$ with $Z$ standard Gaussian noise, we have a Gaussian mixture for every $t=1$ with effective parameters $\mu_t = \mu (1-t)$ and $\sigma^2_t = (1-t)^2 \sigma^2  + t^2$. As the noise is i.i.d. on every patch, the signal to noise for any $R$ is 
\begin{equation}
    K_R^2(t) = \frac{(1-t)^2\mu^2}{(1-t)^2 \sigma^2 + t^2} |R|\, .
\end{equation}
Is the signal-to-noise large as $N \to \infty$ for every $0\leq t \leq 1$? Taking $ t = 1 - k/N$ yields 
\begin{equation}
    K_R^2(t) \asymp \mu^2 k^2 |R|/N^2 = O(1)\, ,
\end{equation}
for $R= B$ and $R= BC$, and the asymptotic expansion fails. In fact, in this limit, we are able to show Markov length $\xi \to \infty$. Noticing that $K_{R_1 R_2}^2 = K_{R_1}^2 + K_{R_2}^2$ for disjoint $R_1$ and $R_2$, we can write 
\begin{equation}
    F(K_R(t)) - F(K_{AR}(t)) = -K_A(t)^2 F'(K_R(t))\, ,
\end{equation} 
where we use $K_A \to 0$ and the $'$ denotes differentiation with respect to $K^2$. The CMI takes the form 
\begin{equation}
    I_t(A;C \mid B) \asymp K_A^2(t)[F'(K_{BC}(t))-F'(K_B(t))]\, ,
\end{equation}
which we try to maximize over $t$. In particular, setting 
\begin{equation}
    I_\mathrm{peak}= \sup_t I_t(A;C \mid B)\, , \quad  t_* =  \argmax_t I_t(A;C \mid B)
\end{equation}
we are able to show no exponential decay exists. First, define an effective variance parameter 
\begin{equation}
    \sigma_\mathrm{eff}^2(t) = \sigma^2 + \frac{t^2}{(1-t)^2}\, , 
\end{equation}
such that we may interpret the noise path as purely expanding the variances. Then, plug in the definition of $K_R(t)$, which gives  
\begin{equation}
    \sup_t K_A^2(t)[F'(K_{BC}(t))-F'(K_B(t))] = \sup_t \frac{\mu^2|A|}{\sigma_\mathrm{eff}^2(t)}\left[F'\left(\frac{\mu^2(|B|+|C|)}{\sigma_\mathrm{eff}^2(t)}\right)-F'\left(\frac{\mu^2 |B|}{\sigma_\mathrm{eff}^2(t)}\right)\right]\, .
\end{equation}
The limit keeps $|B|/|C| = O(1)$, so let's isolate that parameter, setting $\lambda = 1+|C|/{|B|}$, we get 
\begin{equation}
    \sup_t K_A^2(t)[F'(K_{BC}(t))-F'(K_B(t))] = \frac{|A|}{|B|}\sup_{K_B}K_B^2 \left[F'\left(\lambda K_B^2 \right)-F'\left(K_B^2\right)\right]\, ,
\end{equation}
where we also set $\mu^2|B|/\sigma_\mathrm{eff}^2(t) = K_B^2$. Also note that the supremum over $t$ is equivalent to supremum over $K_B^2$, which is the only $t$ dependent parameter, which is the main point of this substitution. Crucially, the supremum over $K_B^2$ yields just a number and does not depend on $|B|$ by itself anywhere. As a result, $I_{t_*}(A;C\mid B) \asymp 1/|B|$, and the Markov length is divergent. 

Lastly, we can obtain $k$ explicitly. Noting that we need $K_B^2(t_*) \asymp O(1)$ at the supremum, we need to obtain 
\begin{equation}
    \frac{\mu^2 |B|}{\sigma_\mathrm{eff}^2(t_*)} = O(1)\, \implies 1-t_* = \frac{1}{\mu} \sqrt{\frac{K_B^2(t_*)}{|B|}}\, .
\end{equation}
Since $|B| = O(N^2)$, the peak time $t_*$ scales as $1-1/N$, as expected. 

\paragraph{The time windows.}Since we can control exactly the CMI in the Gaussian mixture, let us try to give asymptotic formulas for each, demonstrating that the windows need not be exactly identical.

First, recall the nonlocality times $t^{\mathrm{start/end}, \delta}_\mathrm{nonloc}$ are defined by finding times such that the CMI $I_t(A;C\mid B) \approx \delta$ by Defn.~\ref{defn:nonlocality_times}. We assume we pick $\delta$ as a fixed fraction of the peak CMI value. Since we already evaluated the CMI in the transition window, we immediately know scale identically as the peak, i.e. $t^{\mathrm{start/end}, \delta}_\mathrm{nonloc} \sim 1-O(1/N)$. 

Second, recall that $t^{\mathrm{start/end}, \delta}_\mathrm{spec}$ is defined \emph{mutual informations}. For the start, we want to find the first $t$ such that $I_0(S;AB) - I_t(S;B) = \delta$. Assuming $\delta$ is smaller than the peak value, this is equivalent to calculating $F(K_B(t)) - F(K_{AB}(0)) = \delta$. Since $B$ occupies a large fraction of the image, the second term is exponentially suppressed in $|B|$ and we may drop it. For large signal to noise, we then just need to solve 
\begin{multline}
    \frac{\exp(-K_B^2(t^{\mathrm{start}, \delta}_\mathrm{spec})/2)}{K_B} = \delta \implies K_B^2(t^{\mathrm{start}, \delta}_\mathrm{spec}) = 2 \log \frac{1}{\delta} + O(\log \frac{1}{K_B}) \\\implies 1-t^{\mathrm{start}, \delta}_\mathrm{spec} = \frac{1}{\mu} \sqrt{\frac{K_B^2(t^{\mathrm{start}, \delta}_\mathrm{spec})}{|B|}}\sim \sqrt\frac{\log 1/\delta}{|B|}\, .
\end{multline}
Focusing near $t=1$, near the critical time $\delta \sim 1/N^2$, we thus get $1-t^{\mathrm{start}, \delta}_\mathrm{spec} \sim  \sqrt{\log (N)}/N$. 

Lastly, the speciation end time is defined as the largest time such that $I_t(S;ABC) = \delta)$. Near $t \to 1$, when the global label is degraded, the signal to noise is very small, and instead we evaluate the the integral in~\eqref{eq:F_integral} in the small $K$ limit. This yields $I(S;ABC) = \log 2 - F(K_{ABC}) = K_{ABC}^2/2$ which we set equal to $\delta$. Using the formula for the exponent being $O(1)$, we have 
\begin{equation}
    1-t^{\mathrm{end}, \delta}_\mathrm{spec} = \frac{\sqrt{\delta}}{\sqrt{|ABC|}} = O(1/N^2)\, , 
\end{equation}
where we again took $\delta = O(1/N^2)$. This proves that the locality window has a size proportional to $1/N$, while the semantic window has the scalings $1-O(\sqrt{\log N}/N)$ and $1-O(1/N^2)$.

\begin{figure*}[t]
    \centering

    \begin{minipage}[t]{0.48\textwidth}
          \centering
        \includegraphics[width=\linewidth]{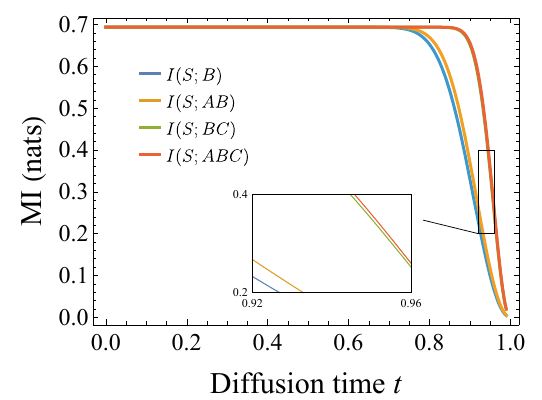}
        \vspace{-0.5em}
        \centerline{\textbf{(a)}}
    \end{minipage}
    \begin{minipage}[t]{0.48\textwidth}
        \centering
        \includegraphics[width=\linewidth]{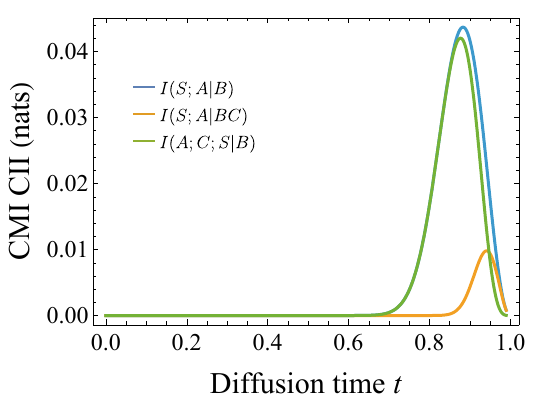}
        \vspace{-0.5em}
        \centerline{\textbf{(b)}}
    \end{minipage}

    \caption{
    Exact numerical evaluation of the Gaussian-mixture information measures with specific values given in \ref{app:numerical-gmm}.
    \textbf{(a)} Mutual information curves for the four spatial regions. The cliff diagnoses the symmetry-breaking phase transition. Inset: zoomed-in region near critical window. 
    \textbf{(b)}   Local and global conditional information gaps $I_t(S;A|B),I_t(S;A|BC)$, and their difference which gives conditional interaction information $I_t(A;C;S|B)$. The peak
    diagnoses the non-locality transition.
    }
    \label{fig:gmm-numerics}
\end{figure*}

\subsection{Numerical evaluation of the exact GMM formulas}
\label{app:numerical-gmm}

We numerically evaluate the exact information-theoretic formulas derived above for the
ultra-local two-component Gaussian mixture model.  We take the homogeneous mean
vector \(\mu=(1,\ldots,1)\), unit bare variance \(\sigma^2=1\), and a \(20\times 20\)
flattened system.  The spatial partition is chosen as
\[
    a=\|\mu_A\|^2=16,\qquad
    b=\|\mu_B\|^2=80,\qquad
    c=\|\mu_C\|^2=304,
\]
corresponding to a local patch \(A\), a buffer \(B\) occupying \(20\%\) of the
system, and the remaining context \(C\).  The forward noising process is
\[
    X_t=(1-t)X_0+tZ,\qquad Z\sim\mathcal{N}(0,I).
\]
We numerically evaluate the core integral \(F(W)\) by adaptive numerical quadrature and the
resulting curves are interpolated on a uniform grid \(t\in[0,0.99]\) with
spacing \(\Delta t=0.005\).

Fig.~\ref{fig:gmm-numerics}(a) tracks the four mutual informations
\(
    I_t(S;B), I_t(S;AB), I_t(S;BC), I_t(S;ABC).
\)
These curves are monotone decreasing under the forward diffusion dynamics and
develop sharp late-time cliffs.  The smaller regions lose semantic information
earlier, while the global regions remain informative until later times.  This
is the finite-size numerical signature of the forward-backward semantic
transition.

Fig.~\ref{fig:gmm-numerics}(b) plots the two conditional gaps
\(
    I_t(S;A|B)=I_t(S;AB)-I_t(S;B),
\)
and
\(
    I_t(S;A|BC)=I_t(S;ABC)-I_t(S;BC).
\)
These gaps peak near the steepest parts of the corresponding MI cliffs.  For
the parameters used here, \(G_{\rm loc}\) peaks near \(t\simeq 0.88\), whereas
\(G_{\rm glob}\) is smaller and delayed, peaking near \(t\simeq 0.94\).  This
delay reflects the greater robustness of the global context \(ABC\) to the
forward noise.

The conditional interaction information
\(
    I_t(A;S;C|B)
    =
    I_t(S;A|B)-I_t(S;A|BC)
\) is also illustrated in Fig.~\ref{fig:gmm-numerics}(b).
For the identity-covariance ultra-local GMM, conditioning on the semantic label
removes all residual spatial dependence, so
\(
    I_t(A;S;C|B)=I_t(A;C|B).
\)
Thus the same curve is also the unconditional CMI diagnosing non-locality.  The
CII/CMI curve has a single sharp peak, numerically located near
\(t\simeq 0.88\) for the present parameters, coinciding with the local semantic
information cliff.  This provides an exact-calculation check of the proposed
concurrence between the symmetry-breaking and non-locality transitions.

\section{Convergence from Separated Class Means}
\label{app:separated-means}

We state and prove the formal version of the convergence result discussed in the main text.

\begin{theorem}[Convergence from separated class means] \label{thm:separated-means-app}
(Formal version of Theorem \ref{thm:separated-means})Let $S$ have a fixed finite set of labels with positive prior probabilities. Suppose that the common-cause hypothesis holds with a size-independent $\alpha>0$. For each $N$, let $B$ be annulus with radius $N$, and use $X_t=(1-t)X_0+tZ$, where $Z\sim\mathcal N(0,I)$ is independent of $(X_0,S)$. Write $m_s=\mathbb E[X_{B,0}\mid S=s]$. Suppose there are constants $D_N,C_N>0$, potentially scaling with $N$, such that, for all $s\ne s'$ and all $u\in\mathbb R^{|B|}$,
\begin{align}
\|m_s-m_{s'}\|_2^2&\geq D_N,\label{eq:mean-separation}\\
\mathbb E\!\left[e^{u^\top(X_{B,0}-m_s)}\mid S=s\right]
&\leq e^{C_N\|u\|_2^2/2}.\label{eq:class-tail-bound}
\end{align}
Suppose the separation between semantic means $D_N$ scales faster than the tail bound within class $C_N$ so that they admit a choice of $\delta_N$ below the CMI peak at each size, $0<\alpha\delta_N<H(S)/2$ and
\begin{equation}\label{eq:concurrence-threshold-rate}
\log\!\frac{H(S)}{\alpha\delta_N}=o(D_N/(C_N+1)),
\end{equation}
then both speciation endpoints and both nonlocality endpoints converge to $t=1$. In particular,
\begin{equation}\label{eq:separated-means-rate}
1-t_{\mathrm{spec}}^{\mathrm{start},\alpha\delta_N}
=O\!\left(\sqrt{\frac{\log[H(S)/(\alpha\delta_N)]}{D_N/(C_N+1)}}\right).
\end{equation}
\end{theorem}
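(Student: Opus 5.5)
The plan is to reduce everything to a single quantity, the residual label uncertainty $H(S\mid X_{B,t})$ in the buffer. Then Theorem~\ref{thm:window-containment} carries the result to the other three endpoints.

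\textbf{Step 1: reduce the speciation start to buffer uncertainty.} Since $I_0(S;AB)\le H(S)$, we have
\[
I_0(S;AB)-I_t(S;B)\le H(S)-I_t(S;B)=H(S\mid X_{B,t}).
\]
Consequently, if $H(S\mid X_{B,t})<\alpha\delta_N$ for every $t\le t_N$, then $t_{\mathrm{spec}}^{\mathrm{start},\alpha\delta_N}\ge t_N$. Because $\delta_N$ lies below the CMI peak, Theorem~\ref{thm:window-containment} applies and gives
\[
t_N\le t_{\mathrm{spec}}^{\mathrm{start}}\le t_{\mathrm{nonloc}}^{\mathrm{start}}\le t_{\mathrm{nonloc}}^{\mathrm{end}}\le t_{\mathrm{spec}}^{\mathrm{end}}\le 1.
\]
Its Markov-chain hypothesis holds because the noise $Z$ is independent of $(X_0,S)$. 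It therefore suffices to exhibit $t_N$ with
\[
1-t_N=O\!\left(\sqrt{\log[H(S)/(\alpha\delta_N)]\,(C_N+1)/D_N}\right),
\]
which tends to $0$ by \eqref{eq:concurrence-threshold-rate}.

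\textbf{Step 2: decode the label from the buffer.} Write $\tau=1-t$. Conditional on $S=s$, the noisy buffer $X_{B,t}$ has mean $\tau m_s$. Its centered part $\tau(X_{B,0}-m_s)+tZ$ is sub-Gaussian with proxy
\[
\tau^2C_N+t^2\le C_N+1,
\]
using \eqref{eq:class-tail-bound} and independence of $Z$. Use the nearest-mean decoder $\widehat S=\arg\min_s\|X_{B,t}-\tau m_s\|$. A pairwise error $s\to s'$ requires the projection of the centered fluctuation onto the unit vector $(m_{s'}-m_s)/\|m_{s'}-m_s\|$ to exceed $\tau\|m_s-m_{s'}\|/2$. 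The sub-Gaussian (Chernoff) tail then bounds this by $\exp\!\big(-\tau^2D_N/(8(C_N+1))\big)$. A union bound over the fixed label set gives
\[
P_e(t)\le L\exp\!\big(-\tau^2 D_N/(8(C_N+1))\big).
\]

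\textbf{Step 3: Fano.} Fano's inequality gives
\[
H(S\mid X_{B,t})\le H(S\mid\widehat S)\le h_2(P_e)+P_e\log(L-1).
\]
For $P_e\le 1/2$ this is at most $P_e\log(eL/P_e)$. This function is increasing in $P_e$, and $P_e(t)$ is decreasing in $\tau$, so the requirement $H(S\mid X_{B,t})<\alpha\delta_N$ becomes monotone in $t$. It holds for all $t\le t_N$ once $P_e(t_N)\le c\,\alpha\delta_N/\log(L/\alpha\delta_N)$. Inverting this in $\tau$ gives
\[
\tau_N^2\asymp \frac{8(C_N+1)}{D_N}\Big[\log\frac{L}{\alpha\delta_N}+\log\log\frac{L}{\alpha\delta_N}\Big].
\]
Since $L$ and $H(S)$ are fixed constants and $\alpha\delta_N<H(S)/2$, this is $O\big(\log[H(S)/(\alpha\delta_N)]\,(C_N+1)/D_N\big)$, which is the claimed rate.

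\textbf{Main obstacle.} The step needing most care is Step 3. The binary-entropy term, the $\log\log$ correction and the constants must be absorbed into the stated $O(\cdot)$ uniformly in $N$. This is where the lower bound $\alpha\delta_N<H(S)/2$, which keeps the logarithm bounded away from $0$, and condition \eqref{eq:concurrence-threshold-rate} enter: they guarantee $\tau_N<1$ eventually, so that $t_N\in[0,1]$ is a legitimate time. I would also check that the existence of the windows is exactly what the hypothesis ``$\delta_N$ below the CMI peak'' provides, so that the infimum and supremum in Theorem~\ref{thm:window-containment} are over nonempty sets.
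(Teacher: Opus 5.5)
Your proposal is correct and follows the paper's proof essentially step for step: the reduction $I_0(S;AB)-I_t(S;B)\le H(S\mid X_{B,t})$, the Chernoff bound for the nearest-mean decoder with proxy $C_N(1-t)^2+t^2\le C_N+1$, the conversion of error probability into residual label entropy, the inversion at $t_N$, and Theorem~\ref{thm:window-containment} for the remaining three endpoints. The only difference is in the conversion step. You apply Fano directly to the nearest-mean decoder and use $h_2(p)+p\log(L-1)\le p\log(eL/p)$. The paper instead passes through $1-P_{\mathrm{FB}}\le 2\,\mathbb E[1-\max_s q_s]$ and the bound of Lemma~\ref{lem:sandwich}, with a cruder estimate that is a constant times $\sqrt{1-P_{\mathrm{FB}}}$. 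This affects only constants (and your harmless $\log\log$ correction) in the rate.
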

\begin{proof}
First, we bound how often a noisy observation of $B$ is assigned the wrong label. In class $s$, the mean observation is $(1-t)m_s$. Use the rule that picks the closest class mean. It can mistake $s$ for $s'$ only if the observation crosses half the gap between these means.
 
 Let unit vector $e=(m_{s'}-m_s)/\|m_{s'}-m_s\|_2$ point toward that competing mean. Comparing the squared distances to $(1-t)m_s$ and $(1-t)m_{s'}$ shows that the displacement $Y=X_{B,t}-(1-t)m_s$ must satisfy
\begin{equation}
e^\top Y\geq z,\qquad z=\frac{1-t}{2}\|m_{s'}-m_s\|_2
\end{equation}
for an error to occur.

Conditional on $S=s$, we have $Y=(1-t)(X_{B,0}-m_s)+tZ_B$. To bound the chance of crossing the midpoint, we use a Chernoff bound on the conditional r.v. $e^\top Y \mid S$. In particular, we have
\begin{align}
    \Pr(\text{pairwise crossing}\mid S=s)&=\Pr(e^\top Y\geq z\mid S=s),\\ \nonumber
        &=\Pr(e^{\lambda e^\top Y}\geq e^{\lambda z}\mid S=s),\\ \nonumber
        &\leq e^{-\lambda z}\mathbb E[e^{\lambda e^\top Y}\mid S=s],\nonumber
\end{align}
where we take any $\lambda>0$, and the last step uses Markov's inequality.
 Plugging $u=\lambda(1-t)e$ into~\ref{eq:class-tail-bound} yields
 \begin{align}
     \mathbb E\!\left[e^{\lambda e^\top (1-t)(X_{B,0}-m_s)}\mid S=s\right]
&\leq e^{C_N\lambda^2(1-t)^2/2} .
 \end{align}
Together with \(\mathbb E\!\left[e^{\lambda e^\top tZ_{B}}\mid S=s\right]=e^{\lambda^2t^2/2}\) by Gaussianity, we obtain
\begin{equation}
 \Pr(\text{pairwise crossing}\mid S=s)
\leq\exp\!\left[-\lambda z+\frac{\lambda^2}{2}
\bigl(C_N(1-t)^2+t^2\bigr)\right].
\end{equation}
For $t<1$, right hand side is minimized at $\lambda=z/[C_N(1-t)^2+t^2]$, yielding $\exp[-z^2/(2(C(1-t)^2+t^2))]$. By~\ref{eq:mean-separation}, $z\geq(1-t)\sqrt{D_N}/2$; also $C_N(1-t)^2+t^2\leq C_N+1$. A union bound over the $L-1$ competing labels gives the following estimate for $t<1$:
\begin{equation}\label{eq:nearest-mean-error}
\Pr(\text{nearest-mean error})
\leq (L-1)e^{-D_N/(C_N+1)(1-t)^2/8}.
\end{equation}

Next, we connect classification error to the forward--backward success rate. Given $X_{B,t}$, write $q_s=p_t(s\mid X_{B,t})$. The best classifier chooses the label with largest $q_s$, so its error probability is $\mathbb E[1-\max_s q_s]$. Since $1-\sum_s q_s^2\leq2(1-\max_s q_s)$, equation~\ref{eq:fb-rate} and~\ref{eq:nearest-mean-error} give
\begin{equation}
1-P_{\mathrm{FB}}(B,t)
\leq 2\mathbb E[1-\max_s q_s]
\leq 2(L-1)e^{-D_N/(C_N+1)(1-t)^2/8}.
\end{equation}

The two-sided sandwich bounds in~\ref{eq:sandwich-1} now give $H(S\mid X_{B,t})\leq h_2(P_{\mathrm{FB}}(B,t))+(1-P_{\mathrm{FB}}(B,t))\log(L-1)$. For fixed $L$, the right-hand side is at most a constant times $\sqrt{1-P_{\mathrm{FB}}(B,t)}$. Combining this with the error bound above yields
\begin{equation}\label{eq:buffer-entropy-bound}
H(S\mid X_{B,t})\leq K e^{-kD_N/(C_N+1)(1-t)^2},
\end{equation}
where $K,k>0$ do not depend on $N$ or $t$.

Now set $t_N=1-b\sqrt{\log[H(S)/(\alpha\delta_N)]/(D_N/(C_N+1))}$, with $b$ a fixed large constant. The threshold condition makes $t_N\to1$, so $t_N\in[0,1]$ for large $N$. For every $t\leq t_N$, equation~\ref{eq:buffer-entropy-bound} is at most $K[\alpha\delta_N/H(S)]^{kb^2}$. Choose $b$ so that $kb^2>1$ and $K2^{-(kb^2-1)}<H(S)$. Since $\alpha\delta_N/H(S)<1/2$, this bound is smaller than $\alpha\delta_N$. Finally,
\begin{equation}
I_0(S;AB)-I_t(S;B)
\leq H(S)-I_t(S;B)
=H(S\mid X_{B,t}).
\end{equation}
Thus the speciation start is no earlier than $t_N$, which proves~\ref{eq:separated-means-rate}. Theorem~\ref{thm:window-containment} places the other three endpoints between that start and $1$. All four endpoints therefore converge to $1$.
\end{proof}

\end{document}